\definecolor{darkgreen}{rgb}{0,0.6,0}
\title{A Correspondence Between Random Neural Networks and Statistical Field Theory}
\author{S. S. Schoenholz, J. Pennington, and J. Sohl-Dickstein\\
Google Brain\\
\textit{\{schsam, jpennin, jaschasd\}@google.com}
}
\newtheorem*{theorem}{Main Result}
\newtheorem{result}{Result}
\newtheorem{corollary}{Corollary}
\begin{document} 

\maketitle

\begin{abstract} 
A number of recent papers have provided evidence that practical design questions about neural networks may be tackled theoretically by studying the behavior of random networks. However, until now the tools available for analyzing random neural networks have been relatively \textit{ad hoc}. In this work, we show that the distribution of pre-activations in random neural networks can be exactly mapped onto lattice models in statistical physics. We argue that several previous investigations of stochastic networks actually studied a particular factorial approximation to the full lattice model. For random linear networks and random rectified linear networks we show that the corresponding lattice models in the wide network limit may be systematically approximated by a Gaussian distribution with covariance between the layers of the network. In each case, the approximate distribution can be diagonalized by Fourier transformation. We show that this approximation accurately describes the results of numerical simulations of wide random neural networks. Finally, we demonstrate that in each case the large scale behavior of the random networks can be approximated by an effective field theory.
\end{abstract} 

\section{Introduction}
\label{introduction}

Machine learning methods built on deep neural networks have had unparalleled success across a dizzying array of tasks ranging from image recognition~\citep{krizhevsky2012} to translation~\citep{wu2016} to speech recognition and synthesis~\citep{hinton2012}. Amidst the rapid progress of machine learning at large, a theoretical understanding of neural networks has proceeded more modestly. In part, this difficulty stems from the complexity of neural networks, which have come to be composed of millions or even billions~\cite{shazeer2017} of parameters with complicated topology. 

Recently, a number of promising theoretical results have made progress by considering neural networks that are, in some sense, random. For example,~\citet{choromanska2015} showed that random rectified linear neural networks could, with approximation, be mapped onto spin glasses;~\citet{saxe2014} explored the learning dynamics of randomly initialized networks;~\citet{daniely2016} and ~\citet{daniely2017} studied an induced duality between neural networks with random pre-activations and compositions of kernels;~\citet{raghu2016} and ~\citet{poole2016} studied the expressivity of deep random neural networks; and~\citet{schoenholz2016} studied information propagation through random networks. Work on random networks in the context of Bayesian neural networks has a longer history~\citep{neal1996priors,neal2012,cho2009}. Overall it seems increasingly likely that statements about randomly initialized neural networks might be able to inform practical design questions.

In a seemingly unrelated context, the past century has witnessed significant advances in theoretical physics, many of which may be attributed to the development of statistical, classical, and quantum \emph{field theories}. Field theory has been used to understand a remarkably diverse set of physical phenomena, ranging from the standard model of particle physics~\citep{weinberg1967}, which represents the sum of our collective knowledge about subatomic particles, to the codification of phase transitions using Landau theory and the renormalization group~\citep{chaikin2000}. Consequently, an extremely wide array of tools have been developed to understand and approximate field theories. 

In this paper we elucidate an explicit connection between random neural networks and statistical field theory. We demonstrate how well-established techniques in statistical physics can be used to study random neural networks in a quantitative and robust way. We begin by constructing an ensemble of random neural networks that we believe has a number of appealing properties. In particular, one limit of this ensemble is equivalent to studying neural networks after random initialization while another limit corresponds to probing the statistics of minima in the loss landscape. We then introduce a change of variables which shows that the weights and biases may be integrated out analytically to give a distribution over the pre-activations of the neural network alone. This distribution is identical to that of a statistical lattice model and this mapping is exact. We examine an expansion of our results as the network width grows large, and obtain concise and interpretable results for deep linear networks and deep rectified linear networks. We show that there exist well-defined mean field theories whose fluctuations may be fully characterized, and that there exist corresponding continuum field theories that govern the long wavelength behavior of these random networks. We compare our theory to simulations of random neural networks and find exceptional agreement. Thus we show that the behavior of wide random networks can be very precisely characterized.

This work leaves open a wide array of avenues that may be pursued in the future. In particular, the ensemble that we develop allows for a loss to be incorporated in the randomness. Looking forward, it seems plausible that statements about the distribution of local optima could be obtained using these methods. Moreover early training dynamics could be investigated by treating the small loss limit as a perturbation. Finally, generalization to arbitrary neural network architectures and correlated weight matrices is possible.

\section{Background} 

We now briefly discuss lattice models in statistical physics and their corresponding effective field theories, using the ubiquitous \emph{Ising model} as an example. Many materials are composed of a lattice of atoms. Magnets are such materials where the electrons orbiting the atoms all spin in the same direction. To model this behavior, physicists introduced a very simple model that involves ``spins'' placed on vertices of a lattice. In our simple example we consider spins sitting on a one-dimensional chain of length $L$ at sites indexed by $l$. The spins can be modeled in many ways, but in the simplest formulation of the problem we take $z^l\in\{-1,+1\}$. This represents spins that are either aligned or anti-aligned. For ease of analysis we consider a periodic chain defined so that the first site is connected to the last site and $z^{L+1} = z^0$.

The statistics of the spins in such a system are determined by the Boltzmann distribution which gives the probability of a configuration of spins to be given by $P(\{z^l\}) = e^{-\beta H(\{z^l\})}/Q$, where $\beta$ is the reciprocal of the thermodynamic temperature. Here,
\begin{equation}\label{eq:ising_model}
H(\{z^l\}) = -\frac J2\sum_l z^l z^{l+1}
\end{equation}
is the ``energy'' of the system, where $J$ is a coupling constant. This energy is minimized when all of the sites point in the same direction. The normalization constant $Q$ is the partition function, and is given by
\begin{equation}
Q = \sum_{z^0\in\{-1,1\}}\cdots\sum_{z^L\in\{-1,1\}}e^{-\beta H}.
\end{equation}
Despite the relative simplicity of the Ising model it has had enormous success in predicting the qualitative behavior of an extremely wide array of materials. In particular, it can successfully explain transitions of physical systems from disordered to ordered states.

In general, lattice models are often unwieldy, and their successful predictions are sometimes surprising given how approximately they treat interactions present in real materials. The resolution to both of these concerns lies in the realization that we often are most concerned with the behavior of systems at very large distances relative to the atomic separations. For example, in the case of magnets we care much more about the behavior of the whole material rather than how any two spins in the material behave. 

This led to the development of Effective Field Theory (EFT) where we compute a field $u(x)$ by averaging over many $z^l$ using, for example, a Gaussian window centered at $x$. The field $u(x)$ is defined at every point in space and so $x$ here corresponds to a continuous relaxation of $l$. It turns out that a number of features of the original model, such as symmetries and locality, survive the averaging process. In EFT we study a minimal energy that captures these essential (or long range) aspects of our original theory. 

The energy describing the EFT is typically a complicated function of $u$ and its derivatives, $\beta H[u(x), \nabla u(x), \nabla^2 u(x),\cdots]$. However, it has been shown that the long wavelength behavior of the system can successfully be described by considering a low-order expansion of $\beta H$. For the Ising model, for example, the effective energy is,
\begin{equation}\label{eq:eft_example}
\beta H = \frac12\int dx\left[mu^2(x) + vu^4(x) + K(\nabla u(x))^2\right].
\end{equation}
It can be shown, using the theory of irrelevant operators, that as long as $v>0$ higher order powers of $u$ do not change qualitative aspects of the resulting theory. Only even powers are allowed because the Ising model has global $z^l\to-z^l$ symmetry and the gradient term encodes the propensity of spins to align with one another. EFTs such as this one have been very successful at describing the large scale behavior of lattice models such as the Ising model.  A great triumph of modern condensed matter physics was the realization that phases of matter could be characterized by their symmetries in this way. 

A very large effort has been devoted to developing techniques to analyze lattice models and EFTs. Consequently, any theory that can be written as a lattice model or EFT has access to a wide array of approximate analytic and numerical techniques that can be leveraged to study it.  We will employ several of these techniques, such as the saddle point approximation, here. This paper is therefore a way of opening the door to use this extensive toolset to study neural networks.

\section{An Exact Correspondence}

Consider a fully-connected feed-forward neural network, $f:\mathbb R^{N_0}\to\mathbb R^{N_L}$, with $L$ layers of width $N_l$ parametrized by weights $W_{\alpha\beta}^l$, biases $b^l_\alpha$, and nonlinearities $\phi^l:\mathbb R\to\mathbb R$. The network is defined by the equation,
\begin{equation}
f(x) = \phi^{L+1}(W^L\phi^{L}(\cdots\phi^0(W^0 x +  b^0)\cdots) +  b^L).
\end{equation}
We equip this network with a loss function, $\ell(f(x), t)$, where $x\in\mathbb R^M$ is the input to the network and $t\in\mathbb R^N$ is a target that we would like to model. Given a dataset (inputs together with targets) given by $\{(x_i, t_i) : i\in\mathcal M\}$ we can define a ``data'' piece to our loss,
\begin{equation}
\mathcal L_D(f) = \sum_{i\in\mathcal M}\ell(f(x_i),  t_i).
\end{equation}
Throughout this text we will use Roman subscripts to specify an input to the network and Greek subscripts to denote individual neurons. We then combine this with an $L^2$ regularization term on both the weights and the biases to give a total loss,
\begin{equation}
\mathcal L(f) = \frac{J_D}2\mathcal L_D(f) +  \sum_{l=0}^L\left(\frac{N_l}{2\sigma_w^2}W^l_{\alpha\beta}W^l_{\alpha\beta} + \frac1{2\sigma_b^2}b^l_\alpha b^l_\alpha\right).
\end{equation}
Here we have introduced a parameter $J_D$ that controls the relative influence of the data on the loss. It can also be understood as the reciprocal of the regularization parameter. In this equation and in what follows, we adopt the Einstein summation convention in which there is an implied summation over repeated Greek indices.

To construct a stochastic ensemble of networks we must place a measure on the space of networks. As the objective we hope to minimize is the total loss, $\mathcal L$, with reference to \citet{jaynes1957} we select the maximum entropy distribution over $f$ subject to a measurement of the expected total loss, %\jcom{this motivation in terms of taking the maximum entropy distribution conditioned on observing an {\em expected} loss still weirds me out.},
\begin{equation}
\langle \mathcal L\rangle = \int df P(f)\mathcal L(f).
\end{equation}
This gives a probability of finding a network $f$ to be given by, $P(f) = e^{-\mathcal L(f)}/Q$ where $Q$ is the partition function, 
\begin{equation}\label{eq:stochastic_partition_raw}
Q = \int[dW][db]e^{-\mathcal L}.
\end{equation}
%\begin{align}\label{eq:stochastic_partition_raw}
%Q = \int&[dW][db]\exp\Bigg[-\frac{J_D}2\mathcal L_D(f)  - \frac12\sum_{l=0}^L\Bigg(\frac{N_l}{\sigma_w^2}W^l_{\alpha\beta}%W^l_{\alpha\beta} + \frac1{\sigma_b^2}b^l_\alpha b^l_\alpha\Bigg)\Bigg].
%\end{align}
Here we have introduced the notation $[dW] = \prod_l\prod_{\alpha\beta} dW_{\alpha\beta}^l$ and $[db] = \prod_l\prod_\alpha db^l_\alpha$ for simplicity.

While a choice of ensemble in this context will always be somewhat arbitrary, we argue that this particular ensemble has several interesting features that make it worthy of study. First, if we set $J_D=0$ then this amounts to studying the distribution of untrained, randomly initialized, neural networks with weights and biases distributed according to $W_{\alpha\beta}^l\sim\mathcal N(0,\sigma_w^2N_l^{-1})$ and $b^l_\alpha\sim\mathcal N(0,\sigma_b^2)$ respectively. This situation also amounts to considering a Bayesian neural network with a Gaussian distributed prior on the weights and biases as in~\citet{neal2012}. When $J_D$ is small, but nonzero, we may treat the loss as a perturbation about the random case. We speculate that the regime of small $J_D$ should be tractable given the work presented here. Studying the case of large $J_D$ will probably require methodology beyond what is introduced in this paper; however, if progress could be made in this regime, it would give insight into the distribution of minima in the loss landscape.

Our main result is to rewrite eq.~\eqref{eq:stochastic_partition_raw} in a form that is more amenable to analysis. In particular, the weights and biases may be integrated out analytically resulting in a distribution that depends only on the pre-activations in each layer. This formulation elucidates the statistical structure of the network and allows for systematic approximation. By a change of variables we arrive at the following theorem (for proof appendix \ref{app:main_result}).

\begin{theorem}
Through the change of variables, $z^l_{\alpha;i}  = W^l_{\alpha\beta}\phi^l(z^{l-1}_{\beta;i}) + b_\alpha^l$, the distribution over weights and biases defined by eq.~\eqref{eq:stochastic_partition_raw} can be converted to a distribution over the pre-activations of the neural network. When $N_l\gg |\mathcal M|$ the distribution over the pre-activations is described by a statistical lattice model defined by the partition function, 
\begin{align}
Q = \int[dz]\exp\left[-\frac{J_D}2\sum_{i\in\mathcal M}\ell(\phi^{L+1}(z^L_i),t_i) - \frac12\sum_{l=0}^L\left((\bm z^l_\alpha)^T(\bm \Sigma^l)^{-1}\bm z^l_\alpha + \ln|\bm\Sigma^l|\right)\right].\label{eq:stochastic_partition_full}
\end{align}
Here $(\bm z^l_\alpha)^T = (z^l_{\alpha;1},\cdots, z^l_{\alpha,|\mathcal M|})$ is a vector whose components are the pre-activations corresponding different inputs to the network and $\bm\Sigma^l_{ij} = \sigma_w^2N_l^{-1}\phi^l(z^{l-1}_{\alpha;i})\phi^l(z^{l-1}_{\alpha;j}) + \sigma_b^2$ is the correlation matrix between activations of the network from different inputs. The lattice is a one-dimensional chain indexed by layer $l$ and the ``spins'' are $z^l_i\in\mathbb R^{N_l}$. We term this class of lattice model the Stochastic Neural Network.
\end{theorem}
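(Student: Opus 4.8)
The plan is to realize the change of variables not as a bijection but as an insertion of constraints followed by an exact Gaussian marginalization of the parameters. First I would, for each layer $l$, neuron $\alpha$, and datum $i$, insert a factor $1 = \int dz^l_{\alpha;i}\,\delta(z^l_{\alpha;i} - W^l_{\alpha\beta}\phi^l(z^{l-1}_{\beta;i}) - b^l_\alpha)$ into eq.~\eqref{eq:stochastic_partition_raw}, with the base case $z^{-1}_i \equiv x_i$. This turns $Q$ into a joint integral over $(W,b,z)$ and lets me rewrite the data loss $\ell(f(x_i),t_i)$ as $\ell(\phi^{L+1}(z^L_i),t_i)$, which depends only on the new variables $z^L$ and no longer on $(W,b)$.

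The key observation is that for a fixed previous layer $z^{l-1}$ the defining relation is \emph{linear} in the parameters $(W^l,b^l)$ of layer $l$, so the entire $(W,b)$ dependence --- the quadratic regularizer together with the linear constraints --- is Gaussian and factorizes over layers and over neurons. I would therefore integrate out $(W^l_{\alpha\cdot},b^l_\alpha)$ for each neuron independently. Writing $\Phi^l_{\beta;i} = \phi^l(z^{l-1}_{\beta;i})$, the map $z^l_{\alpha;i} = \sum_\beta W^l_{\alpha\beta}\Phi^l_{\beta;i} + b^l_\alpha$ is a linear push-forward of the Gaussian prior $W^l_{\alpha\beta}\sim\mathcal N(0,\sigma_w^2/N_l)$, $b^l_\alpha\sim\mathcal N(0,\sigma_b^2)$, whose image has covariance $\bm\Sigma^l_{ij} = \sigma_w^2 N_l^{-1}\Phi^l_{\beta;i}\Phi^l_{\beta;j} + \sigma_b^2$, exactly the matrix in the statement. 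Each neuron thus contributes a factor $|\bm\Sigma^l|^{-1/2}\exp(-\tfrac12 (\bm z^l_\alpha)^T(\bm\Sigma^l)^{-1}\bm z^l_\alpha)$, which I would derive cleanly through the standard Fourier (auxiliary-field) representation of the $|\mathcal M|$ delta functions, performing the Gaussian $W$ and $b$ integrals first and then the conjugate integral. Taking the product over the $N_l$ neurons in each layer and over all layers, and reinstating the data term, assembles eq.~\eqref{eq:stochastic_partition_full}, with the log-determinant arising from the Gaussian normalizations.

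The role of the hypothesis $N_l\gg|\mathcal M|$ is to guarantee that $\Phi^l$, an $N_{l-1}\times|\mathcal M|$ matrix, has full column rank $|\mathcal M|$, so that $\bm\Sigma^l$ is positive definite and the push-forward Gaussian is non-degenerate; when the width is smaller than $|\mathcal M|$ the image of the parameter space is a proper subspace, $\bm\Sigma^l$ is singular, and $(\bm\Sigma^l)^{-1}$ fails to exist. I expect the main obstacle to be precisely this over-completeness: each neuron carries $N_{l-1}+1$ parameters but only $|\mathcal M|$ pre-activations, so the change of variables is a projection rather than a bijection and the excess directions must be marginalized carefully. It is exactly this marginalization --- together with the fact that $\bm\Sigma^l$ depends on the neighboring field $z^{l-1}$ --- that produces the non-trivial $\ln|\bm\Sigma^l|$ term and couples adjacent layers into the one-dimensional chain. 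A secondary point to handle with care is the bookkeeping of the per-neuron determinant factors, so that the normalization accompanying the quadratic form is correctly accounted for across all $N_l$ neurons of each layer.
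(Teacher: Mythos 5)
Your proposal is correct and follows essentially the same route as the paper's own proof: insert $\delta$-function constraints to introduce the pre-activations, pass to the Fourier (auxiliary-field) representation, integrate out the Gaussian weights and biases by completing the square, and then integrate out the conjugate fields, with $N_l\gg|\mathcal M|$ invoked for exactly the same reason (full rank of $\bm\Sigma^l$, which is a sum of order-$N_l$ rank-one terms). Your per-neuron bookkeeping is also right --- indeed it yields the normalization $N_l\ln|\bm\Sigma^l|$ per layer, consistent with the paper's single-input corollary eq.~\eqref{eq:stochastic_partition_full_single_input}, whereas the prefactor of $\ln|\bm\Sigma^l|$ in the theorem as displayed is a typographical slip in the paper rather than a defect of your argument.
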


Eq.~\eqref{eq:stochastic_partition_full} is the full joint-distribution for the pre-activations in a random network with arbitrary activation functions and layer widths with no reference to the weights or biases. We see that this lattice model features coupling between adjacent layers of the network as well as between different inputs to the network. Finally, we see that the loss now only features the pre-activation of the last layer of the network. The input to the network and the loss therefore act as boundary conditions on the lattice.

There is a qualitative as well as methodological similarity between this formalism and the use of replica theory to study spin glasses~\cite{mezard1987}. Qualitatively, we notice that the replicated partition function in spin glasses involves the overlap function which measures the correlation between spins in different replicas while eq.~\eqref{eq:stochastic_partition_full} is naturally written in terms of $\Sigma^l$ which measures the correlation between activations due to different inputs to the network. Methodologically, when using the replica trick to analyze spin-glasses, one assumes that the interactions between spins are Gaussian distributed and shared between different replicas of the system; by integrating out the couplings analytically, the different replicas naturally become coupled. In this case the weights play a similar role to the interactions and their integration leads to a coupling between different the signals due to different inputs to the network.

Samples from a stochastic network with $\phi(z) = \tanh(z)$ can be seen in fig.~(\ref{fig:stochastic_network_samples}). %\jcom{in abstract and intro you make it sound like the results will only hold in the wide-network limit. should make the distinction more explicit between the exact mapping onto EFT, and the wide layer limit you will take in some of the expansions. Also should make it clear in abstract/intro that by taking more terms in your expansion, you can make them accurate at smaller $N$.}
\begin{figure}[!h]
\includegraphics[width=0.6\textwidth]{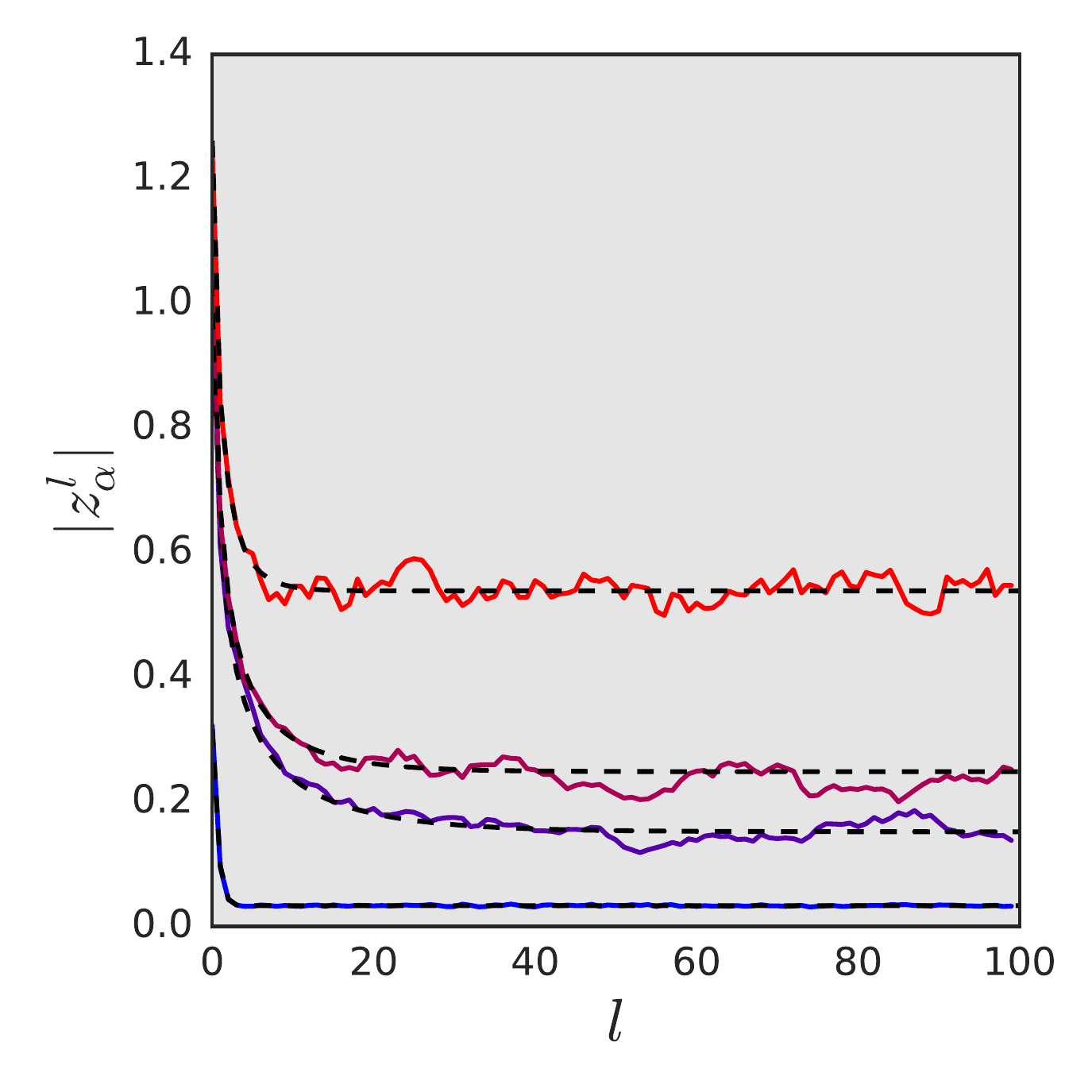}
\centering
\caption{Samples of the norm of pre-activations, $|z^l_\alpha|$, from an $L=100$ layer stochastic neural network with $\phi(z) = \tanh(z)$, $J_D = 0$, $N_l = 500$, and $\sigma_b^2 = 0.001$. The weight variance was changed from 0.1 (blue) to 1.5 (red). Dashed lines show the corresponding mean-field prediction.}
\label{fig:stochastic_network_samples}
\end{figure} 
In this framework we can see that the mean field approximation of \citet{poole2016} amounts to the replacement of $\phi^l(z_\alpha^l)\phi^l(z_\alpha^l)$ by $\langle\phi^l(z_\alpha^l)\phi^l(z_\alpha^l)\rangle$ where $\langle\rangle$ denotes an expectation. This procedure decouples adjacent layers and replaces the complex joint distribution over pre-activations by a factorial Gaussian distribution. As a result, this approximation is unable to capture any cross-layer fluctuations that might be present in random neural networks. We can see this in fig.~(\ref{fig:stochastic_network_samples}) where the black dashed lines denote the prediction of this particular mean field approximation. Note that while changes to the variance are correctly predicted, fluctuations are absent. Both~\citet{poole2016} and ~\citet{schoenholz2016} study this particular factorial approximation to the full joint distribution, eq.~\eqref{eq:stochastic_partition_full}. Additionally, the composition kernels of~\citet{daniely2016,daniely2017} can be viewed as studying correlation functions in this mean-field formalism over a broader class of network topologies.

The mean field theory of \citet{poole2016} is analytically tractable for arbitrary activation function and so it is interesting to study. However, the explicit independence assumption makes it an uncontrolled approximation, especially when generalizing to neural network topologies that are not fully connected feed-forward networks. Additionally, there are many interesting questions that one might wish to ask about correlations between pre-activations in different layers of random neural networks. Finally, it is unclear how to move beyond a mean field analysis in this framework. To overcome these issues, we pursue a more principled solution to eq.~\eqref{eq:stochastic_partition_full} by considering a controlled expansion for large $N_l$. 

To allow tractable progress, we limit the study in this paper to the case of a single input such that $|\mathcal M| = 1$. With this restriction, eq.~\eqref{eq:stochastic_partition_full} can be written explicitly as (see appendix \ref{app:main_result}),
\begin{align}
Q = \int[dz]\exp\Bigg[-\frac12\Bigg\{J_D\ell(\phi^{L+1}(z^L),t) &+ \frac{z_\alpha^0z_\alpha^0}{\sigma_w^2N_0^{-1}x_\beta x_\beta + \sigma_b^2} \nonumber\\
&+ \sum_{l=1}^L\frac{z_\alpha^lz_\alpha^l}{\sigma_w^2N_l^{-1}\phi^l(z^{l-1}_\beta)\phi^l(z^{l-1}_\beta) + \sigma_b^2}\nonumber\\
&+ \sum_{l=1}^LN_l\log(\sigma_w^2N_l^{-1}\phi^l(z^l_\alpha)\phi^l(z^l_\alpha) + \sigma_b^2) \Bigg\}\Bigg].\label{eq:stochastic_partition_full_single_input}
\end{align} 
While results involving the distribution of pre-activations resulting from a single input are an interesting first step we know from~\citet{poole2016, schoenholz2016, daniely2016} that correlations between the pre-activations due to different inputs is important when analyzing notions of expressivity and trainability. We therefore believe that extending these results to nontrivial datasets will be fruitful. To this end, it might be useful to take inspiration from the spin-glass community and seek to rephrase eq.~\eqref{eq:stochastic_partition_full} in terms of an overlap and to look for replica-symmetry breaking.

\section{The Stochastic Neural Network On A Ring}

With the stochastic neural network defined in eq.~\eqref{eq:stochastic_partition_full}, we consider a specific network topology that is unusual in machine learning but is commonplace in physics. In particular, as in the Ising model described above, we consider a stochastic network whose final layer feeds back into its first layer. Since this topology is incompatible with a loss defined in terms of network inputs and outputs, we set $J_D = 0$ in this case.
\begin{figure}[!h]
\includegraphics[width=0.6\textwidth]{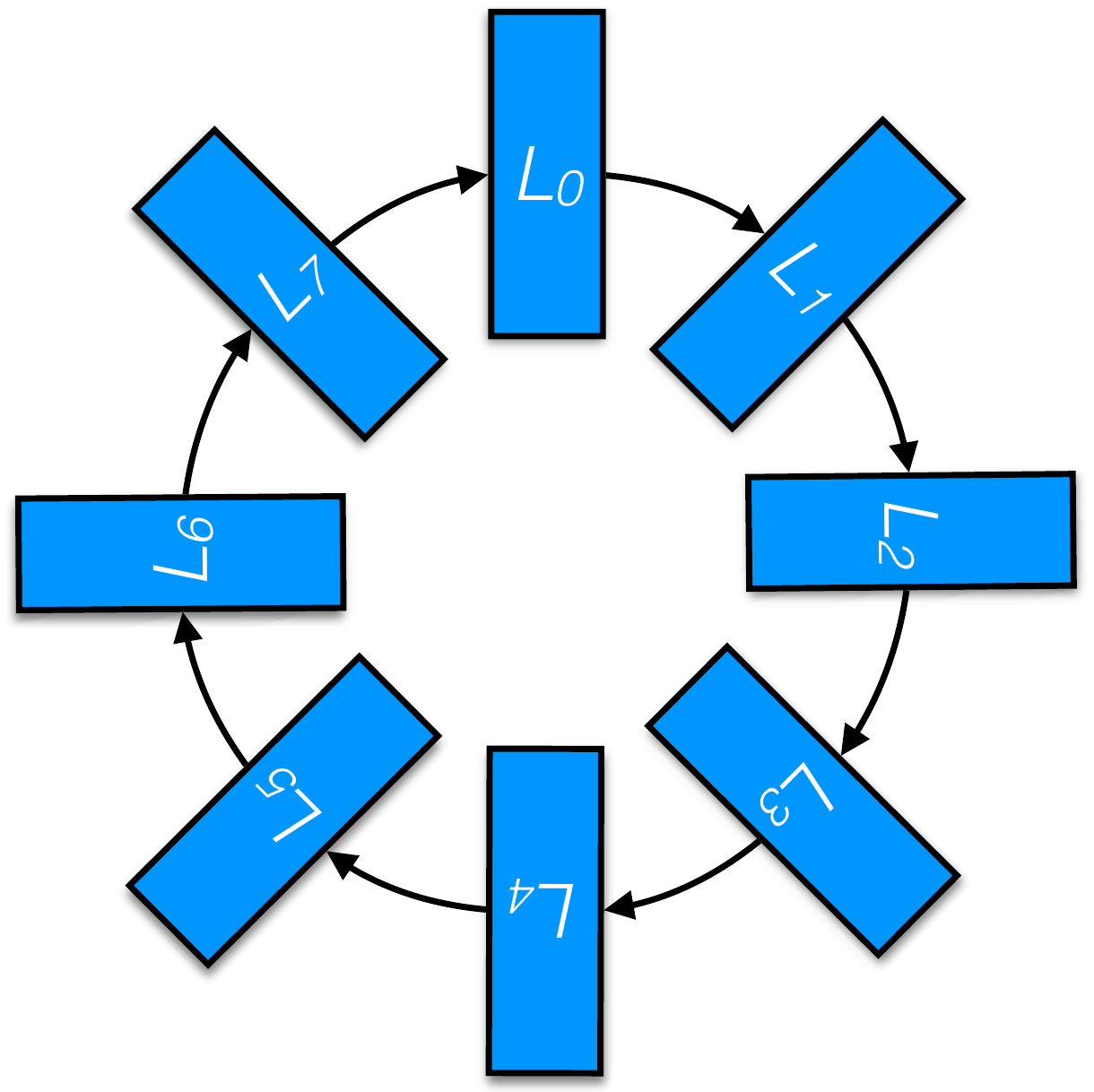}
\centering
\caption{A schematic showing the topology of the Stochastic Neural Network on a ring.}
\label{fig:stochastic_network_ring}
\end{figure} 
A schematic of this network can be seen in fig.~(\ref{fig:stochastic_network_ring}). The substantial advantage of considering this periodic topology is that we can neglect the effect of boundary conditions and focus on the ``bulk'' behavior of the network. The boundary effects can be taken into account once a theory for the bulk has been established. This method of dealing with lattice models is \emph{extremely} common. We additionally set $N_l = N$ and $\phi^l = \phi$ independent of layer. 

The stochastic network on a ring is described by the energy,
\begin{align}
 \mathcal L(\{z^l\}) = &\frac12\sum_{l=0}^L\Bigg\{\frac{z^l_\alpha z^l_\alpha}{\sigma_w^2N_l^{-1}\phi(z^{l-1}_\beta)\phi(z^{l-1}_\beta) + \sigma_b^2} + N\log(\sigma_w^2N^{-1}\phi(z^l_\beta)\phi(z^l_\beta) + \sigma_b^2)\Bigg\}\label{eq:stochastic_partition_ring}
\end{align}
subject to the identification $z^L_\alpha = z^{-1}_\alpha$. We will call this lattice model the stochastic neural network on a ring. For the remainder of this paper we will consider systematic approximations to eq.~\eqref{eq:stochastic_partition_ring}.

\section{Linear Stochastic Neural Networks}

To gain intuition for the stochastic network on a ring we will begin by considering a linear network with $\phi(z) = z$. In this case it is clear that the energy in eq.~\eqref{eq:stochastic_partition_ring} is isotropic. It is therefore possible to change variables into hyper-spherical coordinates and integrate out the angular part explicitly (which will give a constant factor that may be neglected). Consequently, the energy for the stochastic linear network is given by (see appendix~\ref{app:linear_network}),
\begin{align}
\mathcal L(\{r^l\}) = &\frac12\sum_{l=0}^L\Bigg\{\frac{(r^l)^2}{\sigma_w^2N^{-1}(r^{l-1})^2 + \sigma_b^2}- N\log\Bigg(\frac{(r^l)^2}{\sigma_w^2N^{-1}(r^l)^2+\sigma_b^2}\Bigg)\Bigg\}.\label{eq:stochastic_partition_linear}
\end{align}
where $(r^l)^2 = z^l_\alpha z^l_\alpha$.

A controlled approximation to eq.~\eqref{eq:stochastic_partition_linear} as $N\to\infty$ can be constructed using the Laplace approximation (sometimes called the saddle point approximation). The essence of the Laplace approximation is that integrals of the form $I = \int dx e^{-Af(x)}$ can be approximated by $I \approx e^{-Af(x^*)}\int dx e^{-A(x-x^*)^Tf''(x^*)(x-x^*)}$ as $A\to\infty$ where $x^*$ minimizes $f(x)$. Consequently, we first seek a minimum of eq.~\eqref{eq:stochastic_partition_linear} to expand around.  

We make the ansatz that there is a uniform configuration, $r^l = r^*$ independent of layer, that minimizes eq.~\eqref{eq:stochastic_partition_linear}. Under this assumption we find that for $\sigma_w^2 < 1$ there is an optimum when (see appendix~\ref{app:linear_network}),
\begin{equation}\label{eq:saddle_solution_linear}
r^* = \sqrt{\frac{N\sigma_b^2}{\Delta_w}}\,,
\end{equation}
where $\Delta_w = 1 - \sigma_w^2$ measures the distance to criticality. This solution can be tested by generating many instantiations of stochastic linear networks and then computing the average norm of the pre-activations after the transient from the input has decayed. 
%\jcom{should say something about assumptions of this test -- holds when the distribution over $r$ is eg highly peaked or symmetrical, so that the average value of $r$ converges to the minimizing value $r*$}
\begin{figure}[!h]
\includegraphics[width=0.6\textwidth]{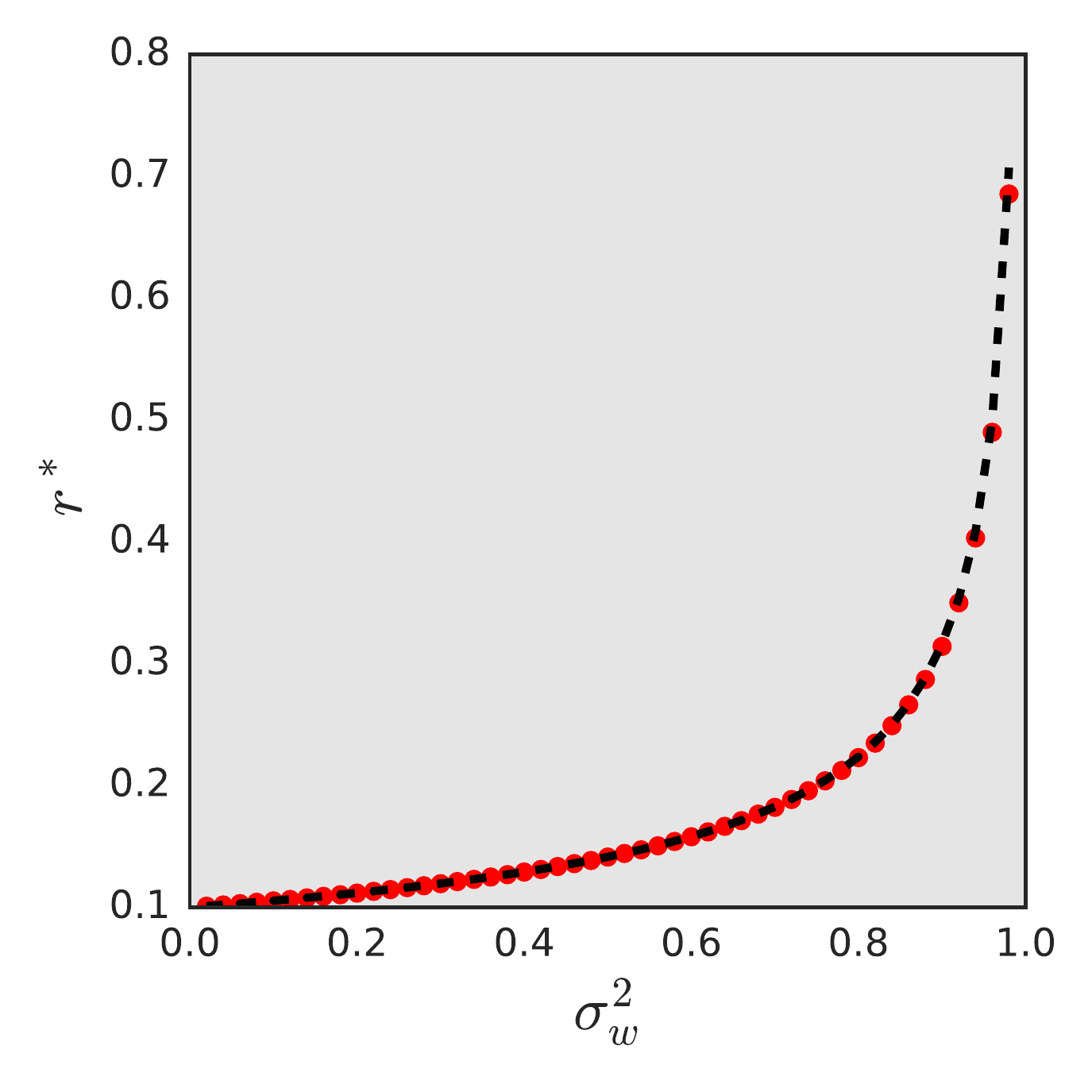}
\centering
\caption{The fixed point of the norm, $r^*$, for a stochastic linear network with $\sigma_b^2 = 0.01$, $L = 1024$, $N =200$. Measurements from instantiations of the network at different $\sigma_w^2$ are shown in red circles. The theoretical prediction is overlaid in black dashed lines.}
\label{fig:saddle_point_linear}
\end{figure} 
In fig.~(\ref{fig:saddle_point_linear}) we plot the empirical norm measured in this way against the theoretical prediction. We see excellent agreement between the numerical result and the theory\footnote{Note that while we are measuring the average norm of the linear stochastic network, we are predicting $r^*$ which is the mode of the distribution. However, these quantities are equal in the large $N$ limit of the Laplace approximation.}.

Nonuniform fluctuations around the minimum can now be computed. Let $r^l = r^* + \epsilon^l$ and expand the energy to quadratic order in $\epsilon^l$. Writing $U(\{\epsilon^l\}) = \mathcal L(\{r^*+\epsilon^l\}) - \mathcal L(\{r^*\})$ we find that (see appendix~\ref{app:linear_network}),
\begin{equation}
U(\{\epsilon^l\}) = \frac{\Delta_w}{\sigma_b^2}\sum_{l=0}^L\left[(1+\sigma_w^4)(\epsilon^l)^2 - 2\sigma_w^2\epsilon^l\epsilon^{l-1}\right].\label{eq:stochastic_partition_linear_saddle}
\end{equation}
As in the work of \citet{poole2016}, here we also approximate the behavior of the full joint distribution by a Gaussian. However, the Laplace approximation retains the coupling between layers and therefore is able to capture inter-layer fluctuations. 

Together eq.~\eqref{eq:saddle_solution_linear} and eq.~\eqref{eq:stochastic_partition_linear_saddle} fully characterize the behavior of the linear stochastic network as $N_l\to\infty.$ By expanding to beyond quadratic order, corrections of order $N_l^{-1}$ can be computed. One application of this would be to reprise the analysis of signal propagation in deep networks in~\citet{schoenholz2016}, but for networks of finite rather than infinite width.
%It might be interesting to try to understand the effect of finite width networks on signal propagation using this method.

As our network is topologically equivalent to a ring, we can perform a coordinate transformation of eq.~\eqref{eq:stochastic_partition_linear_saddle} to the Fourier basis by writing $\epsilon^l  = \sum_q \epsilon_q e^{-iql}$. To respect the periodic boundary conditions of the ring, $q$ will be summed from 0 to $2\pi$ in units of $2n\pi/L$. It follows that (see appendix~\ref{app:linear_network}),
\begin{equation}
U(\{\epsilon_q\}) = \frac{L\Delta_w}{\sigma_b^2}\sum_q\left\{(1+\sigma_w^4) - 2\sigma_w^2\cos q\right\}|\epsilon_q|^2.\label{eq:stochastic_partition_linear_saddle_fourier}
\end{equation} 
The Fourier transformation therefore diagonalizes eq.~\eqref{eq:stochastic_partition_linear_saddle} and so we predict that the different Fourier modes ought to be distributed as independent Gaussians. Since the variance of each mode is positive for $\sigma_w^2<1$, the optimum that we identified in eq.~\eqref{eq:saddle_solution_linear} is indeed a minimum.  

This calculation gives very precise predictions about the behavior of pre-activations in wide, deep stochastic networks.
\begin{figure}[!h]
\includegraphics[width=0.6\textwidth]{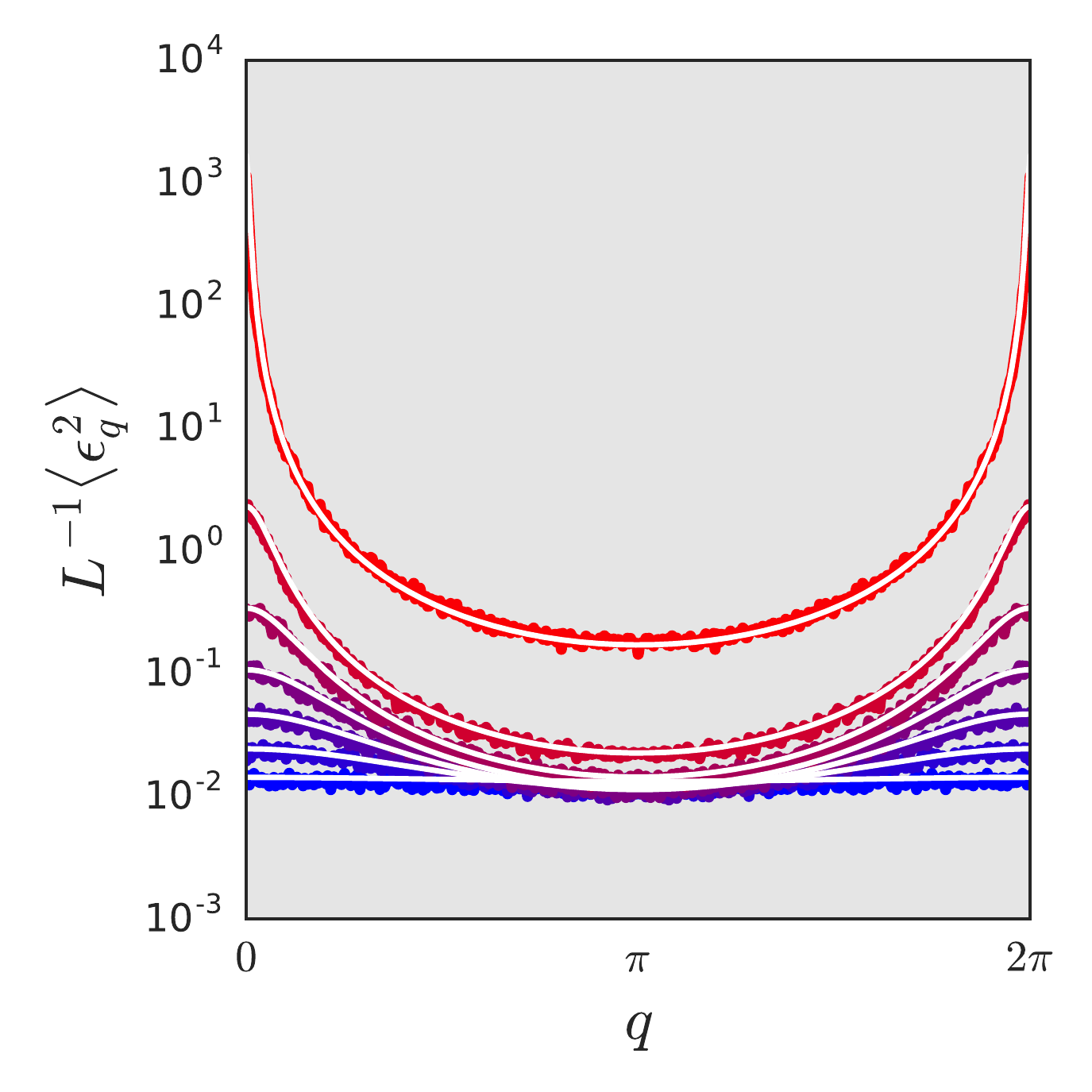}
\centering
\caption{The statistics of the Fourier transform of fluctuations in deep linear stochastic neural networks. This figure offers a comparison between the fluctuations sampled from stochastic neural networks (colored lines) and our theoretical predictions (white lines). The networks are of depth $L=1024$, width $N=200$, and $\sigma_b^2=0.01$. The colors denote different values of $\sigma_w^2$ in the set $0.02$ (blue), $0.18$, $0.34$, $0.5$, $0.66$, $0.82$, $0.98$ (red).}%\jcom{Nit, here and in later figure shoul label $x$-axis as 0, $\pi/2$, ..., $2\pi$}}
\label{fig:saddle_point_fluctuations_linear}
\end{figure} 
To test these predictions we generate $M = 200$ samples from linear stochastic networks of width $N = 200$ and depth $L = 1024$. For each sample we take the norm of the pre-activations in the last $512$ layers of the network and compute the fluctuation of the pre-activation around $r^*$ (eq.~\eqref{eq:saddle_solution_linear}). For each sample we then compute the FFT of the norm of the pre-activations. Finally, we compute the variance of each Fourier mode (for more details and plots see appendix~\ref{app:linear_network_numerics}). We plot the results of this calculation in fig.~(\ref{fig:saddle_point_fluctuations_linear}) for different values of $\sigma_w^2$. In each case we see strong agreement between our numerical experiments and the prediction of our theory. Note that the factorial Gaussian approximation discussed briefly above is unable to capture these fluctuations.

The long wavelength behavior of fluctuations in the deep linear network is well described by an effective field theory. This effective field theory can be constructed by expanding eq.~\eqref{eq:stochastic_partition_linear_saddle_fourier} to quadratic order in $q$, approximating sums by integrals and differences by derivatives. We find that the effective field theory is defined by the energy (see appendix~\ref{app:linear_network}),
\begin{equation}
U[\epsilon(x)] = \frac{\Delta_w}{\sigma_b^2}\int dx\left[\Delta_w^2(\epsilon(x))^2 + \sigma_w^2\left(\frac{\partial \epsilon(x)}{\partial x}\right)^2\right].
\end{equation}
We note that this field theory features explicitly $\epsilon(x)\to-\epsilon(x)$ as well as $x\to -x$ symmetry. Perhaps expectedly this implies that information can equally travel forward and backwards through the network. 

Both the effective field theory and the lattice model have long wavelength fluctuations that are given by the $q\to 0$ limit of eq.~\eqref{eq:stochastic_partition_linear_saddle_fourier},
\begin{equation}
U(\{\epsilon_q\}) \approx \frac{L\Delta_w\sigma_w^2}{\sigma_b^2}\sum_q\left\{\frac{\Delta_w^2}{\sigma_w^2} + q^2\right\}|\epsilon_q|^2.
\end{equation}
Given this equation we can read off the length-scale governing fluctuations to be $\xi = \sigma_w/\Delta_w.$ We therefore see that stochastic linear networks feature a phase transition at $\Delta_w = 0$ with an accompanying diverging depth-scale in the fluctuations. 

\section{Rectified Linear Stochastic Neural Networks}

Having discussed the linear stochastic neural network we now move on to the more complicated case of the stochastic neural network on a ring with rectified linear activations, $\phi(z) = \max(0, z)$. Again we seek to construct the Laplace approximation to eq.~\eqref{eq:stochastic_partition_ring}.

In this case we notice that the norm squared of any $z^l$ decomposes into two terms, $(z^l)^2 = (z^l_+)^2 + (z^l_-)^2$. Here, $z^l_+$ and $z^l_-$ are the vectors of positive and negative components of $z^l$ respectively. With this decomposition, the energy for the rectified linear stochastic neural network can be written as,
\begin{align}
 \mathcal L(\{z^l\}) = &\frac12\sum_{l=0}^L\Bigg\{\frac{(z^l_+)^2 + (z^l_-)^2}{\sigma_w^2N^{-1}(z^l_+)^2 + \sigma_b^2} + N\log(\sigma_w^2N^{-1}(z^l_+)^2 + \sigma_b^2)\Bigg\}\label{eq:stochastic_energy_rectified_linear}.
\end{align}
The integral over each $z^l$ can be decomposed as a sum of integrals over each of the $2^N$ different orthants. In each orthant, the set of positive and negative components of $z^l$ is fixed; Consequently, we may apply independent hyperspherical coordinate transformations to $z^l_+$ and to $z^l_-$ within each orthant.

With this in mind, let $k_l$ be the number of positive components of $z^l$ in a given orthant with the remaining $N - k_l$ components being negative. It is clear that the number of orthants with $k_l$ positive components will be ${N\choose k_l}$. The partition function for the rectified linear network can therefore be written as (see appendix~\ref{app:rectified_linear_network}),
\begin{align}
Q &= 2\left(\frac{\sqrt\pi}{2}\right)^N\prod_l \sum_{k_l=0}^N{N\choose k_l}\frac1{\Gamma\left(\frac{N-k_l}2\right)\Gamma\left(\frac{k_l}2\right)}\int dr^l_+dr^l_-(r^l_+)^{N-k_l-1}(r^l_-)^{k_l-1}\nonumber\\&\hspace{3pc}\times\exp\left[-\frac12\sum_{l=0}^L\left(\frac{(r_+^l)^2 + (r^l_-)^2}{\sigma_w^2N^{-1}(r^{l-1}_+)^2 + \sigma_b^2} + N\log(\sigma_w^2N^{-1}(r^l_+)^2+\sigma_b^2)\right)\right].\label{eq:stochastic_partition_rectified_linear}
\end{align}
Here $r^l_+$ and $r^l_-$ is the norm of the positive and negative components of the pre-activations respectively. In the $N\to\infty$ limit, the sum over orthants can be converted into an integral and the $\Gamma$ functions can be approximated using Stirling's formula. We therefore see that, unlike in the case of linear networks, the lattice model for rectified linear networks contains three interaction fields, $r^l_+$, $r^l_-$, and $k_l$.

As in the linear case, we can now construct the Laplace approximation for this network. We first make an ansatz of a constant solution, $r^l_{+/-} = r^*_{+/-}$ and $k^l = k^*$, independent of the layer $l$. Solving for the minimum of the energy we arrive at the following saddle point conditions (see appendix~\ref{app:rectified_linear_network}),
\begin{align}
&r^*_{+/-} = \sqrt{\frac{N\sigma_b^2}{2(1-\sigma_w^2/2)}}\,, & k^* = \frac N2.
\end{align} 
Perhaps this result should not be surprising given the symmetry of the random weights. We expect that in the $N\to\infty$ limit the network will settle into a state where half the pre-activations are negative and half the pre-activations are positive.
\begin{figure}[!h]
\includegraphics[width=0.6\textwidth]{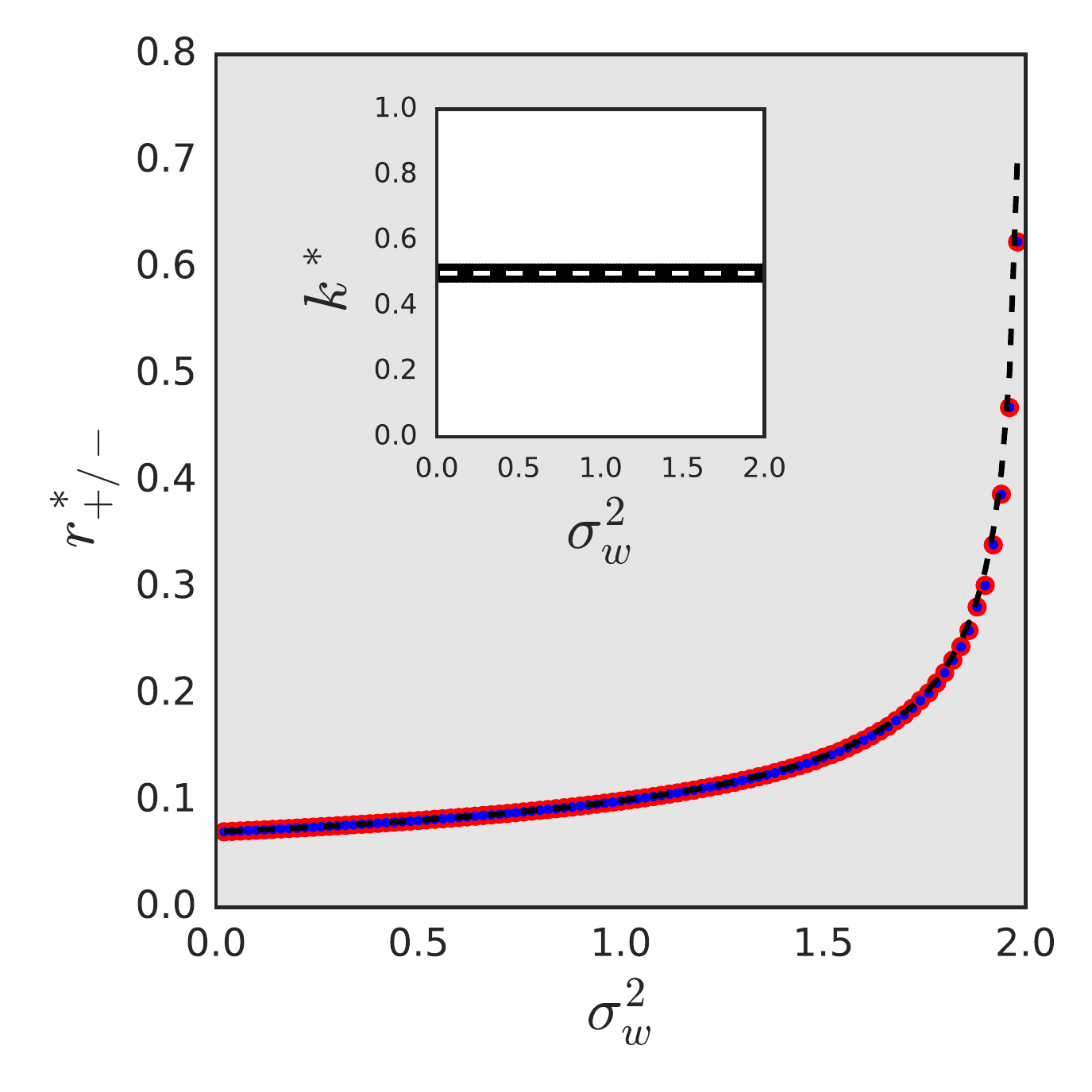}
\centering
\caption{The fixed point of the positive (red) and negative (blue) components of the norm, $r^*_{+/-}$, for a stochastic rectified linear network with $\sigma_b^2 = 0.01$, $L = 1024$, $N =200$. Measurements from instantiations of the network at different $\sigma_w^2$ are shown. The theoretical prediction is overlaid in black dashed lines. The inset shows that measured values for $k^*$ (black) compared with the theoretical prediction (dashed white).}
\label{fig:stochastic_partition_relu_saddle}
\end{figure} 
We can test the results of this prediction in fig.~(\ref{fig:stochastic_partition_relu_saddle}) by sampling $M = 200$ instances of 1024 layer deep rectified linear stochastic neural networks with $\sigma_w^2\in(0,2)$. As in the case of the deep linear stochastic network we see excellent agreement between theory and numerical simulation.

Nonuniform fluctuations around the saddle point can once again be computed. To do this we write $r^l_{+/-} = r^*_{+/-} + \epsilon^l_{+/-}$ and $k^l = k^* + \epsilon_k^l$. We now expand the energy and make the substitutions $\tilde\epsilon^l_{+/-} = \sqrt{2(1-\sigma_w^2/2)/\sigma_b^2}\epsilon^l_{+/-}$ and $\tilde \epsilon^l_k=\epsilon^l_k/\sqrt{N}$ to find an energy cost for fluctuations (see appendix~\ref{app:rectified_linear_network}),
\begin{align}
U = \frac12\sum_{l=0}^L\Bigg(&(1+\sigma_w^4/2)(\tilde\epsilon^l_+)^2 + (\tilde\epsilon^l_-)^2 + 3(\tilde\epsilon^l_k)^2+\tilde\epsilon^l_k(\tilde\epsilon^l_+ - \tilde\epsilon^l_-) - \sigma_w^2\tilde\epsilon_+^{l-1}(\tilde\epsilon_+^l + \tilde\epsilon_-^l)\Bigg).
\end{align}
We can understand some of these fluctuations in an intuitive way, for example fluctuations in the norm of the fraction of positive components and the norm of the negative components are anti-correlated. But in general rectified linear networks have subtle and interesting fluctuations, and to our knowledge this work presents the first quantitative theoretical description of the statistics of random rectified linear networks. We note in passing that that the fully factorial mean field theory would not be able to capture any of the anisotropy in the fluctuations identified here. 

As in the linear case, the layer-layer coupling can be diagonalized by moving into Fourier space. In the rectified linear case, however, this transformation retains covariance between the different fluctuations. In particular, we can write the energy in Fourier space as $U = \frac12\sum_q \bm\epsilon^\dag(q) \bm\Sigma^{-1}(q) \bm \epsilon(q)$ where $\bm\epsilon^\dag(q) = \begin{pmatrix}\epsilon_+^{-q} & \epsilon_-^{-q} & \epsilon_k^{-q}\end{pmatrix}$ is a vector of fluctuations and
\begin{align}
\bm \Sigma^{-1}(q) &= \begin{pmatrix}
1+\sigma_w^4/2 -\sigma_w^2\cos q & -\frac12\sigma_w^2e^{-iq} & \frac12 \\
-\frac12\sigma_w^2e^{iq} & 1 & -\frac12 \\
\frac12 & -\frac12 & 3
\end{pmatrix}
\end{align}
is the Fourier space inverse covariance matrix between different fields (see appendix~\ref{app:rectified_linear_network}).

We can compare our theoretical predictions for the covariance matrix against numerical results generated in an analogous manner to the linear case.
\begin{figure}[!h]
\includegraphics[width=\textwidth]{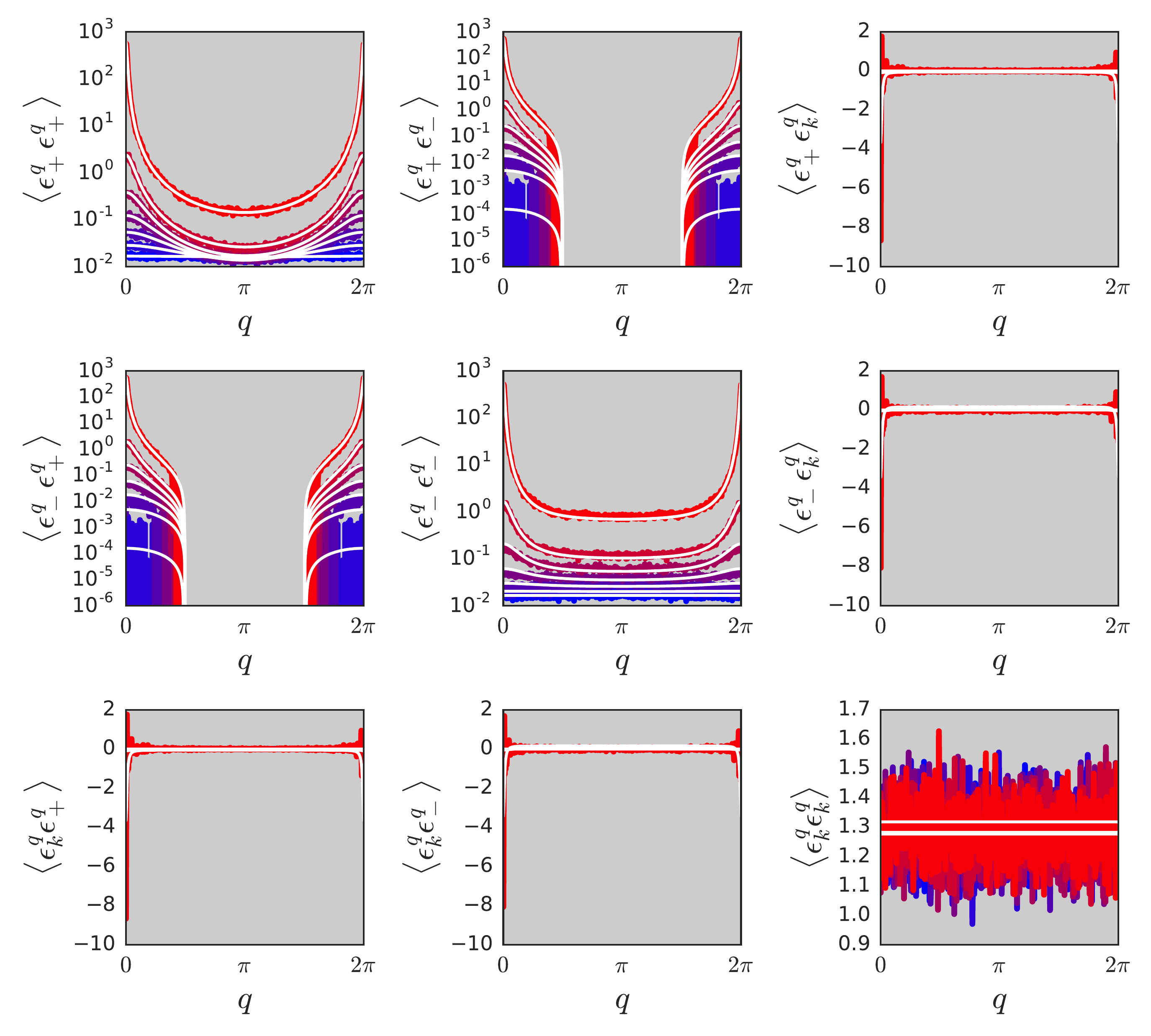}
\centering
\caption{The statistics of the Fourier transform of fluctuations in deep rectified stochastic neural networks. This figure offers a comparison between the fluctuations sampled from stochastic neural networks (colored lines) and our theoretical predictions (white lines). The networks are of depth $L=1024$, width $N=200$, and $\sigma_b^2=0.01$. Different colored curves denote different values of $\sigma_w^2$. In particular we show $\sigma_w^2=0.02$ (blue), $0.34$, $0.66$, $0.98$, $1.3$, $1.62$, $1.94$ (red). Different components of the covariance matrix are shown in different subplots.}
\label{fig:saddle_point_fluctuations_relu}
\end{figure}
The results of this comparison can be seen in fig.~(\ref{fig:saddle_point_fluctuations_relu}) for different elements of the covariance matrix and different values of $\sigma_w^2\in(0,2)$. As in the linear case we see excellent agreement between the theoretical predictions and the numerical simulations. Finally, we can complete our analysis by computing an effective field theory that governs long wavelength fluctuations (see appendix~\ref{app:rectified_linear_network}).

Once again we can identify an effective field theory that governs long wavelength fluctuations. We find that it is given by (see appendix~\ref{app:rectified_linear_network}),
\begin{align}
U = \frac12\int dx\Bigg[&(1 - \sigma_w^2 +\sigma_w^4/2)(\epsilon_+(x))^2 + (\epsilon_-(x))^2 + 3(\epsilon_k(x))^2 + \sigma_w^2\epsilon_+(x)\epsilon_-(x) \nonumber\\
& + \epsilon_k(x)(\epsilon_+(x) - \epsilon_-(x)) + \sigma_w^2\left(\frac{\partial\epsilon_+(x)}{\partial x}\right)^2  + \sigma_w^2\frac{\partial \epsilon_+(x)}{\partial x}\epsilon_-(x)  \Bigg].
\end{align}
Note that unlike in the case of the stochastic linear network both the $\epsilon\to-\epsilon$ and $x\to-x$ symmetries are broken when acting on any given field. This symmetry breaking makes sense since the network treats the different fields quite asymmetrically and the forward and backward propagation dynamics are quite different. In physics, the symmetries and symmetry breaking have been shown to dictate the behavior of systems over large regions of their parameters. Thus, as in Landau theory, many systems are classified based on the symmetries they possess. The presence of this symmetry breaking between linear networks and rectified-linear networks suggests that such an approach might be fruitfully applied to neural networks. As with the deep linear network, the long-wavelength limit of the effective field theory and the lattice model agree.

\section{Discussion}

Here we have shown that for fully-connected feed forward neural networks there is a correspondence between random neural networks and lattice models in statistical physics. While we have not discussed it here, this correspondence actually holds for a very large set of neural network topologies. Lattice models can also be constructed for ensembles of random neural networks that have weights and biases whose distributions are more complicated than factorial Gaussian. In general, the effect of nontrivial network topology and correlations between weights will be to couple spins in the lattice model. Thus, the topology of the neural network will generically induce a topology of the corresponding lattice model. For example, convolutional networks will have corresponding lattice models that feature interactions between the set of all the pre-activations in a given layer that share a filter.% \jcom{Mention convnets in particular?}

As in physics, it seems likely that lattice models for complex neural networks will be fairly intractable compared to the relatively simple examples presented here. On the other hand, the success of effective field theories at describing the long wavelength fluctuations of random neural networks suggests that even complex networks may be tractable in this limit. Moreover, as neural networks get larger and more complex the behavior of long wavelength fluctuations will become increasingly relevant when thinking about the behavior of the neural network as a whole. 

We believe it is likely that there exist universality classes of neural networks whose effective field theories contain the same set of relevant operators. Classifying neural networks in this way would allow us to make statements about the behavior of entire classes of networks. This would transition the paradigm of neural network design away from specific architectural decisions towards a more general discussion about which class of models was most suitable for a specific problem. 

Finally, we note that there is has been significant effort made to understand biological neural activity leveraging similar analogies to lattice models and statistical field theory. Notably, \citet{elad2006} noticed that Ising-like models can quantitatively capture the statistics of neural activity in vertebrate retina; \citet{buice2013} developed field theoretic extensions to older mean-field theories of populations of neurons; far earlier, \citet{ermentrout1979} used similar techniques to investigate how hallucinations between two similar patterns might come about. By placing artificial neural networks into the context of field-theory it may be possible to find subtle relationships with their biological counterparts.

%\jcom{may also want to say something short and happy in the discussion about the potential applicability of this approach to also study biological neural systems. Can note as a previous success the trippy work of jack cowan using mean field theory to explain common visual hallucinations -- see `A mathematical theory of visual hallucination patterns'. From a *brief* literatur search, should also probably reference `Beyond mean field theory: statistical field theory for neural networks' which on-the-surface appears to tackle a similar problem, but in the context of modeling biological rather than artificial neural networks.}

%Overall, the work presented here opens the door for an abundance of powerful techniques from theoretical physics to be leveraged in the study of random neural networks. In the future this could lead to theory being able to inform the design of neural networks in a meaningful and impactful manner.

\section{Appendix}

\subsection{Proof of the Main Result}\label{app:main_result}

\noindent In this section we prove the main result of the paper. We do so in two steps. First we examine the partition function,
\begin{align}\label{seq:stochastic_partition_raw}
Q = \int&[dW][db]\exp\Bigg[-\frac{J_D}2\sum_{i\in\mathcal M} \ell(f(x_i), t_i)  - \frac12\sum_{l=0}^L\Bigg(\frac{N_l}{\sigma_w^2}W^l_{\alpha\beta}W^l_{\alpha\beta} + \frac1{\sigma_b^2}b^l_\alpha b^l_\alpha\Bigg)\Bigg]
\end{align}
and introduce the pre-activations at the cost of adding $\delta$-function constraints. We use the Fourier representation of these constraints to bring them into the exponent. This requires introducing auxiliary variables that enforce the constraints. Once in this form it becomes apparent that the weights and biases are Gaussian distributed and may therefore be integrated out explicitly. Finally we integrate out the constraints that we introduced in the preceding step to convert the distribution into a distribution over the pre-activations alone.

\begin{result}\label{sec:partition_function_fict}
The partition function for the maximum entropy distribution of a fully-connected feed-forward neural network can be written as,
\begin{align}\nonumber
Q = &\int[d\Omega]\exp\Bigg[-\sum_{i\in\mathcal M}\Bigg\{\frac{J_D}2\ell(\phi^{L+1}(z^L_i),t_i) + i\lambda_{\alpha;i}^0(z^0_{\alpha;i} - W^0_{\alpha\beta}x_{\beta;i} - b^0_{\alpha;i})  \\ & + \sum_{l=1}^Li\lambda^l_{\alpha;i}(z^l_{\alpha;i}-W^l_{\alpha\beta}\phi^l(z_{\beta;i}^{l-1}) - b^l_\alpha)\Bigg\} -\sum_{l=0}^L\Bigg(\frac{N_l}{2\sigma_w^2}W^l_{\alpha\beta}W^l_{\alpha\beta} + \frac1{2\sigma_b^2}b^l_\alpha b^l_\alpha\Bigg)\Bigg]\label{seq:partition_function_fict}
\end{align}
where we have let $[d\Omega] = [dW][db][dz][d\lambda]$ for notational convenience. 
\end{result}
\begin{proof}
To demonstrate this result we begin with eq.~\eqref{seq:stochastic_partition_raw} and iteratively use $\delta$-functions to change variables to the pre-activations. Explicitly writing the neural network out in eq.~\eqref{seq:stochastic_partition_raw} gives,
\begin{align}
Q&=\int[dW][db]\exp\Bigg[-\frac{J_D}2\sum_{i\in\mathcal M}\ell(\phi^L(\cdots \phi^1(W^0x_i+b^0)\cdots ), t_i) \nonumber\\&\hspace{7pc} - \sum_{l=0}^L\left(\frac{N_l}{2\sigma_w^2}W^l_{\alpha\beta}W^l_{\alpha\beta} + \frac1{2\sigma_b^2}b^l_\alpha b^l_\alpha\right)\Bigg]\\
&=\int[dW][db]\left(\prod_{i\in\mathcal M}\exp\left[-\frac{J_D}2\sum_i\ell(\phi^L(\cdots \phi^1(W^0x_i+b^0)\cdots ), t_i)\right]\right)\nonumber\\&\hspace{6pc}\times\exp\left[ - \sum_{l=0}^L\left(\frac{N_l}{2\sigma_w^2}W^l_{\alpha\beta}W^l_{\alpha\beta} + \frac1{2\sigma_b^2}b^l_\alpha b^l_\alpha\right)\right]\\
&=\int[dW][db]\Bigg(\prod_{i\in\mathcal M}\int[dz^0_i]\exp\left[-\frac{J_D}2\sum_i\ell(\phi^L(\cdots \phi^2(W^1\phi^1(z^0_i) + b^1)\cdots ), t_i)\right]\nonumber\\&\hspace{5.25pc}\times\delta(z^0_i - W^0 x_i - b^0)\Bigg)\exp\left[ - \sum_{l=0}^L\left(\frac{N_l}{2\sigma_w^2}W^l_{\alpha\beta}W^l_{\alpha\beta} + \frac1{2\sigma_b^2}b^l_\alpha b^l_\alpha\right)\right].
\end{align}
We can repeat this process iteratively until all of the pre-activations have been introduced. We find,
\begin{align}
Q&=\int[dW][db]\Bigg(\prod_{i\in\mathcal M}\prod_{l=0}^L\int[dz_i^l]\exp\left[-\frac{J_D}2\sum_i\ell(\phi^{L+1}(z^L_i), t_i)\right]\nonumber\\&\hspace{2pc}\times\prod_{l=0}^L\delta(z^l_i-W^l\phi^l(z^{l-1}_i) + b^l)\Bigg)\exp\left[ - \sum_{l=0}^L\left(\frac{N_l}{2\sigma_w^2}W^l_{\alpha\beta}W^l_{\alpha\beta} + \frac1{2\sigma_b^2}b^l_\alpha b^l_\alpha\right)\right]
\end{align}
where we will use $\phi^0(z^{-1})\equiv x$ interchangeably for notational simplicity. This procedure has essentially used a change of variables to introduce the pre-activations explicitly into the partition function. 

Here, $\delta$-functions constrain the pre-activations their correct values given the weights. To complete the proof we leverage the Fourier representation of the $\delta$-function as $\delta(x) = \int d\lambda e^{-ix\lambda}$. In particular we use Fourier space denoted by $\lambda^l_\alpha$ for each pre-activation constraint. We therefore find,
\begin{align}
Q&=\int[dW][db]\Bigg(\prod_{i\in\mathcal M}\prod_{l=0}^L\int[dz_i^l]\exp\left[-\frac{J_D}2\sum_i\ell(\phi^{L+1}(z^L_i), t_i)\right]\nonumber\\
&\hspace{2pc}\times\prod_{l=0}^L\delta(z^l_i-W^l\phi^l(z^{l-1}_i) + b^l)\Bigg)\exp\left[ - \sum_{l=0}^L\left(\frac{N_l}{2\sigma_w^2}W^l_{\alpha\beta}W^l_{\alpha\beta} + \frac1{2\sigma_b^2}b^l_\alpha b^l_\alpha\right)\right]\\
&=\int[dW][db]\Bigg(\prod_{i\in\mathcal M}\int[dz_i][d\lambda_i]\exp\left[-\frac{J_D}2\sum_i\ell(\phi^{L+1}(z^L_i), t_i)\right] \nonumber\\
&\hspace{6pc}\times \prod_{l=1}^L\exp\left[-i\lambda_{\alpha;i}^l(z^l_{\alpha;i} - W^l_{\alpha\beta}\phi(z^{l-1}_{\beta;i}) - b^l_\alpha)\right]\Bigg)\nonumber\\&\hspace{6pc}\times\exp\left[ - \sum_{l=0}^L\left(\frac{N_l}{2\sigma_w^2}W^l_{\alpha\beta}W^l_{\alpha\beta} + \frac1{2\sigma_b^2}b^l_\alpha b^l_\alpha\right)\right]\\
&=\int[d\Omega]\exp\Bigg[-\sum_{i\in\mathcal M}\Bigg\{\frac{J_D}2\ell(\phi^{L+1}(z^L_i),t_i) + i\lambda_{\alpha;i}^0(z^0_{\alpha;i} - W^0_{\alpha\beta}x_{\beta;i} - b^0_{\alpha;i})\nonumber  \\ &\hspace{1pc} + \sum_{l=1}^Li\lambda^l_{\alpha;i}(z^l_{\alpha;i}-W^l_{\alpha\beta}\phi^l(z_{\beta;i}^{l-1}) - b^l_\alpha)\Bigg\} -\sum_{l=0}^L\Bigg(\frac{N_l}{2\sigma_w^2}W^l_{\alpha\beta}W^l_{\alpha\beta} + \frac1{2\sigma_b^2}b^l_\alpha b^l_\alpha\Bigg)\Bigg]
\end{align}
as required.
\end{proof}

\begin{theorem}\label{sec:partition_function_feed_forward}
Provided $N_l\gg |\mathcal M|$, the weights, biases, and fictitious fields can be integrated out of eq.~\eqref{seq:partition_function_fict} to give a stochastic process involving only the pre-activations as,
\begin{align}
Q = \int[dz]\exp\left[-\frac{J_D}2\sum_{i\in\mathcal M}\ell(\phi^{L+1}(z^L_i),t_i) - \frac12\sum_{l=0}^L\left((\bm z^l_\alpha)^T(\bm \Sigma^l)^{-1}\bm z^l_\alpha + \ln|\bm\Sigma^l|\right)\right]
\label{seq:stochastic_partition_full}
\end{align}
where $(\bm z^l_\alpha)^T = (z^l_{\alpha;1},\cdots, z^l_{\alpha,|\mathcal M|})$ is a vector of pre-activations corresponding to each input to the network, $\bm\Sigma^l_{ij} = \sigma_w^2N_l^{-1}\phi^l(z^{l-1}_{\alpha;i})\phi^l(z^{l-1}_{\alpha;j}) + \sigma_b^2$ if $l>0$, and $\bm\Sigma^0_{ij} = \sigma_w^2N_l^{-1}x_{\alpha;i}x_{\alpha;j} + \sigma_b^2$ is the correlation matrix between activations of the network from different inputs.
\end{theorem}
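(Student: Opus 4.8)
The plan is to carry out two successive Gaussian integrations, since in eq.~\eqref{seq:partition_function_fict} the weights $W^l$ and biases $b^l$ enter only through terms that are at most quadratic in them. The $L^2$ regularizers supply the purely quadratic pieces $\tfrac{N_l}{2\sigma_w^2}W^l_{\alpha\beta}W^l_{\alpha\beta}$ and $\tfrac{1}{2\sigma_b^2}b^l_\alpha b^l_\alpha$, while the Fourier-represented $\delta$-constraints supply pieces that are \emph{linear} in $W^l$ and $b^l$, with coefficients $i\lambda^l_{\alpha;i}\phi^l(z^{l-1}_{\beta;i})$ and $i\lambda^l_{\alpha;i}$ respectively (writing $\phi^0(z^{-1})\equiv x$ to absorb the $l=0$ boundary term). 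Every $W$ and $b$ integral is therefore an elementary Gaussian, and I would perform them first, layer by layer and component by component, holding $z$ and $\lambda$ fixed.

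Completing the square in each $W^l_{\alpha\beta}$ and each $b^l_\alpha$ produces, after setting aside the ($z$-independent) Gaussian normalization constants, a quadratic form in the auxiliary fields. The key bookkeeping step is that the weight integral contributes $-\tfrac12\,\sigma_w^2 N_l^{-1}\sum_\beta\phi^l(z^{l-1}_{\beta;i})\phi^l(z^{l-1}_{\beta;j})\,\lambda^l_{\alpha;i}\lambda^l_{\alpha;j}$ and the bias integral contributes $-\tfrac12\sigma_b^2\,\lambda^l_{\alpha;i}\lambda^l_{\alpha;j}$, and these combine into exactly $-\tfrac12\,\lambda^l_{\alpha;i}\bm\Sigma^l_{ij}\lambda^l_{\alpha;j}$ (the layer-$l$ neuron index $\alpha$ summed). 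Thus $\bm\Sigma^l$ emerges automatically as the covariance of the $\lambda$ fields, and its $\sigma_b^2$ part is precisely the leftover from the bias integral. It is worth writing out the sign carefully here: the linear coefficients are purely imaginary, so the completed squares are negative and the resulting $\lambda$ quadratic form is negative definite, which guarantees convergence of the next integral.

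With $W$ and $b$ gone, the exponent reads $-\tfrac{J_D}{2}\sum_i\ell(\phi^{L+1}(z^L_i),t_i) - i\,\lambda^l_{\alpha;i}z^l_{\alpha;i} - \tfrac12(\bm\lambda^l_\alpha)^T\bm\Sigma^l\bm\lambda^l_\alpha$, with the loss independent of $\lambda$. I would then integrate out each vector $\bm\lambda^l_\alpha$ using $\int d^{|\mathcal M|}\lambda\,\exp(-\tfrac12\lambda^T\bm\Sigma^l\lambda - i\lambda^T\bm z^l_\alpha)\propto |\bm\Sigma^l|^{-1/2}\exp(-\tfrac12(\bm z^l_\alpha)^T(\bm\Sigma^l)^{-1}\bm z^l_\alpha)$. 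Summing the quadratic form over the repeated neuron index $\alpha$ reproduces the $(\bm z^l_\alpha)^T(\bm\Sigma^l)^{-1}\bm z^l_\alpha$ term, while collecting one factor $|\bm\Sigma^l|^{-1/2}$ from each of the $N_l$ neurons produces the log-determinant term $\tfrac{N_l}{2}\ln|\bm\Sigma^l|$ --- the factor $N_l$ being the one made explicit in the single-input specialization, eq.~\eqref{eq:stochastic_partition_full_single_input} (up to a relabeling of the layer index in the sum, since this determinant depends on $z^{l-1}$). Absorbing the remaining $z$-independent normalizations into $Q$ then yields eq.~\eqref{seq:stochastic_partition_full}.

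The step that demands care --- and where the hypothesis $N_l\gg|\mathcal M|$ enters --- is the legitimacy and ordering of these integrations. Because $\bm\Sigma^l$ depends on $z^{l-1}$ but not on $\lambda$, the iterated integration is valid: for fixed $z$ one treats $\bm\Sigma^l$ as a constant matrix, does the $\lambda$ integral, and leaves the $z$ integral for last. This integral converges and $(\bm\Sigma^l)^{-1}$ exists only if $\bm\Sigma^l$ is positive definite; its Gram part $\sigma_w^2 N_l^{-1}\sum_\alpha\phi^l(z^{l-1}_{\alpha;i})\phi^l(z^{l-1}_{\alpha;j})$ has rank at most $\min(N_{l-1},|\mathcal M|)$, so $N_l\gg|\mathcal M|$ is what guarantees full rank $|\mathcal M|$ and, equivalently, that the linear map $(W^l_{\alpha,:},b^l_\alpha)\mapsto(z^l_{\alpha;i})_i$ is surjective --- i.e. that the $\delta$-constraints do not over-determine the pre-activations, so the $z^l$ are genuine free integration variables rather than a measure-zero constrained set. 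I would therefore state this positive-definiteness/surjectivity condition at the outset and note that it is the only place the width hypothesis is used.
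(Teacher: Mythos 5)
Your proposal is correct and takes essentially the same route as the paper's proof: complete the square in $W^l_{\alpha\beta}$ and $b^l_\alpha$ and integrate them out to leave the quadratic form $-\tfrac12(\bm\lambda^l_\alpha)^T\bm\Sigma^l\bm\lambda^l_\alpha - i(\bm\lambda^l_\alpha)^T\bm z^l_\alpha$, then perform the Gaussian $\lambda$ integral, with the hypothesis $N_l\gg|\mathcal M|$ invoked exactly where you invoke it --- to ensure the Gram-plus-rank-one matrix $\bm\Sigma^l$ is full rank, hence positive definite and invertible. Your bookkeeping of the determinant as $\tfrac{N_l}{2}\ln|\bm\Sigma^l|$ (one factor of $|\bm\Sigma^l|^{-1/2}$ per neuron $\alpha$) is actually the more careful statement: the theorem as printed writes only $\ln|\bm\Sigma^l|$, but the factor of $N_l$ reappears in the single-input specialization, eq.~\eqref{eq:stochastic_partition_full_single_input}, confirming your version.
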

\begin{proof}
We proceed directly completing the square and integrating out Gaussian variables. For notational simplicity we temporarily let $z^{-1} = x$ and $\phi^{0}(z) = z$ be linear. We then integrate out the weights and biases by completing the square,
\begin{align}
Q &= \int[d\Omega]\exp\Bigg[-\sum_{i\in\mathcal M}\Bigg\{\frac{J_D}2\ell(\phi^{L+1}(z^L_i),t_i) + \sum_{l=0}^Li\lambda^l_{\alpha;i}(z^l_{\alpha;i}-W^l_{\alpha\beta}\phi^l(z_{\beta;i}^{l-1}) - b^l_\alpha)\Bigg\}\nonumber\\&\hspace{5.5pc}-\sum_{l=0}^L\Bigg(\frac{N_l}{2\sigma_w^2}W^l_{\alpha\beta}W^l_{\alpha\beta} + \frac1{2\sigma_b^2}b^l_\alpha b^l_\alpha\Bigg)\Bigg]\\
&= \int[d\Omega]\exp\Bigg[-\frac{J_D}2\sum_{i\in\mathcal M}\ell(\phi^{L+1}(z^L_i),t_i) - \sum_{l=0}^L\sum_{i\in\mathcal M}i\lambda^l_{\alpha;i}z^l_{\alpha;i}\nonumber\\
&\hspace{2pc} - \sum_{l=0}^L\frac{N_l}{2\sigma_w^2}\left(W^l_{\alpha\beta} - \frac{i\sigma_w^2}{N_l}\sum_{i\in\mathcal M}\lambda^l_{\alpha;i}\phi^l(z^{l-1}_{\beta;i})\right)\left(W^l_{\alpha\beta} - \frac{i\sigma_w^2}{N_l}\sum_{i\in\mathcal M}\lambda^l_{\alpha;i}\phi^l(z^{l-1}_{\beta;i})\right)\nonumber\\
&\hspace{2pc} - \sum_{l=0}^L\frac{1}{2\sigma_b^2}\left(b^l_{\alpha} - i\sigma_b^2\sum_{i\in\mathcal M}\lambda^l_{\alpha;i}\right)\left(b^l_{\alpha} - i\sigma_b^2\sum_{i\in\mathcal M}\lambda^l_{\alpha;i}\right)\nonumber\\
&\hspace{2pc}-\sum_{l=0}^L\sum_{i,j\in\mathcal M}\lambda_{\alpha;i}^l\lambda_{\alpha;j}^l(\sigma_w^2N_l^{-1}\phi^l(z^{l-1}_{\beta;i})\phi^l(z^{l-1}_{\beta;j}) + \sigma_b^2)\Bigg]\\
&= \int[dz][d\lambda]\exp\Bigg[-\frac{J_D}2\sum_{i\in\mathcal M}\ell(\phi^{L+1}(z^L_i),t_i) - \sum_{l=0}^L\sum_{i\in\mathcal M}i\lambda^l_{\alpha;i}z^l_{\alpha;i}\nonumber\\&\hspace{7pc}-\frac12\sum_{l=0}^L\sum_{i,j\in\mathcal M}\lambda_{\alpha;i}^l\lambda_{\alpha;j}^l(\sigma_w^2N_l^{-1}\phi^l(z^{l-1}_{\beta;i})\phi^l(z^{l-1}_{\beta;j}) + \sigma_b^2)\Bigg].\label{eq:partition_function_constraints}
\end{align}
Interestingly, we notice that upon integrating out the weights and biases, the pre-activations from different inputs become coupled. This is reminiscent of replica calculations in the spin glass literature.

We now rewrite the above expression to elucidate its structure. To do this we first let $(\bm\lambda^l_\alpha)^T = (\lambda^l_{\alpha;1}, \lambda^l_{\alpha;2}, \cdots, \lambda^l_{\alpha;|\mathcal M|})$, $(\bm z^l_\alpha)^T = (z^l_{\alpha;1}, z^l_{\alpha;2},\cdots z^l_{\alpha;|\mathcal M|})$, and $(\bm\phi^l_\alpha)^T = (\phi^l(z^{l-1}_{\alpha;1}),\phi^l(z^{l-1}_{\alpha;2}),\cdots,\phi^l(z^{l-1}_{\alpha;|\mathcal M|}))$. Finally we define the matrix $\bm \Sigma^l = \sigma_w^2N_l^{-1}\bm \phi^l_\alpha(\bm\phi^l_\alpha)^T + \bm 1\sigma_b^2$ where $\bm 1$ is the $|\mathcal M|\times|\mathcal M|$ matrix of ones. Using this notation we can rewrite eq.~\eqref{eq:partition_function_constraints} as,
\begin{equation}
Q = \int[dz][d\lambda]\exp\Bigg[-\frac{J_D}2\sum_{i\in\mathcal M}\ell(\phi^{L+1}(z^L_i),t_i) - \sum_{l=0}^L\left\{\frac12(\bm\lambda^l_\alpha)^T\bm \Sigma^l\bm\lambda^l_{\alpha} - i(\bm\lambda^l_\alpha)^T\bm z^l_\alpha\right\}\Bigg].\label{eq:partition_function_constraints_rewrite}
\end{equation}
Eq.~\eqref{eq:partition_function_constraints_rewrite} clearly has the structure of a multivariate Gaussian as a function of the $\bm\lambda^l_\alpha$. In principle it is therefore possible to integrate out the $\bm\lambda^l_\alpha$. We notice, however, that $\bm\Sigma^l$ is an $|\mathcal M|\times |\mathcal M|$ matrix constructed as a sum of $N_l+1$ terms each being the outer-product of a vector. It follows that the rank of $\bm\Sigma^l$ is at most $N_l + 1$. For this work we will be explicitly interested in the large $N_l$ limit and so we may safely assume that $\bm\Sigma^l$ is full-rank. However, more care must be taken when $N_l+1 \sim |\mathcal M|$. 

Thus, in the case that $N_l\gg |\mathcal M|$ we may integrate out the $\bm\lambda^l_\alpha$ in the usual way to find,
\begin{equation}
Q = \int[dz]\exp\left[-\frac{J_D}2\sum_{i\in\mathcal M}\ell(\phi^{L+1}(z^L_i),t_i) - \frac12\sum_{l=0}^L\left((\bm z^l_\alpha)^T(\bm \Sigma^l)^{-1}\bm z^l_\alpha + \ln|\bm\Sigma^l|\right)\right]
\end{equation}
as required.
\end{proof}

\begin{corollary}
In the event that the network has only a single input eq.~\eqref{eq:stochastic_partition_full} reduces to,
\begin{align}
Q = \int[dz]\exp\Bigg[-\frac12\Bigg\{J_D\ell(\phi^{L+1}(z^L),t) &+ \frac{z_\alpha^0z_\alpha^0}{\sigma_w^2N_0^{-1}x_\beta x_\beta + \sigma_b^2} \nonumber\\
&+ \sum_{l=1}^L\frac{z_\alpha^lz_\alpha^l}{\sigma_w^2N_l^{-1}\phi^l(z^{l-1}_\beta)\phi^l(z^{l-1}_\beta) + \sigma_b^2}\nonumber\\
&+ \sum_{l=1}^LN_l\log(\sigma_w^2N_l^{-1}\phi^l(z^l_\alpha)\phi^l(z^l_\alpha) + \sigma_b^2) \Bigg\}\Bigg].\label{seq:stochastic_partition_full_single_input}
\end{align}
Here we omit the sample index since it is unnecessary.
\end{corollary}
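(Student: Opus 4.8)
The plan is to obtain the Corollary as a direct specialization of Theorem~\ref{sec:partition_function_feed_forward} to the case $|\mathcal M| = 1$, so the entire task reduces to collapsing every matrix/vector object in eq.~\eqref{seq:stochastic_partition_full} to its scalar counterpart. First I would note that with a single input the sum $\sum_{i\in\mathcal M}$ contains one term, so the sample index may be dropped everywhere and the loss becomes $\tfrac{J_D}{2}\ell(\phi^{L+1}(z^L),t)$. Since $(\bm z^l_\alpha)^T = (z^l_{\alpha;1},\dots,z^l_{\alpha;|\mathcal M|})$ now has a single component, each $\bm z^l_\alpha$ is just the scalar $z^l_\alpha$ (still carrying the neuron index $\alpha$), and each $\bm\Sigma^l$ is a $1\times 1$ matrix, i.e.\ the positive scalar $\Sigma^l = \sigma_w^2 N_l^{-1}\phi^l(z^{l-1}_\beta)\phi^l(z^{l-1}_\beta) + \sigma_b^2$ for $l>0$, with the boundary value $\Sigma^0 = \sigma_w^2 N_0^{-1}x_\beta x_\beta + \sigma_b^2$.

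Next I would reduce the two structural terms. The matrix inverse $(\bm\Sigma^l)^{-1}$ becomes the reciprocal $1/\Sigma^l$, and the quadratic form $(\bm z^l_\alpha)^T(\bm\Sigma^l)^{-1}\bm z^l_\alpha$, with its implicit sum over the neuron index $\alpha$ retained, collapses to $z^l_\alpha z^l_\alpha/\Sigma^l$. Evaluating at $l=0$ yields the boundary term $z^0_\alpha z^0_\alpha/(\sigma_w^2 N_0^{-1}x_\beta x_\beta + \sigma_b^2)$, and for $l\ge 1$ the summand $z^l_\alpha z^l_\alpha/(\sigma_w^2 N_l^{-1}\phi^l(z^{l-1}_\beta)\phi^l(z^{l-1}_\beta)+\sigma_b^2)$, which reproduces the first two lines of eq.~\eqref{seq:stochastic_partition_full_single_input} verbatim.

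The one genuinely non-routine point is the log-determinant, and I expect it to be the main obstacle. Read literally, $\ln|\bm\Sigma^l|$ for a $1\times 1$ matrix is simply $\ln\Sigma^l$, which appears to be missing the factor $N_l$ displayed in the statement. I would resolve this by returning to where the determinant originates in the proof of Theorem~\ref{sec:partition_function_feed_forward}, namely eq.~\eqref{eq:partition_function_constraints_rewrite}: the Gaussian integration is carried out independently over each of the $N_l$ auxiliary vectors $\bm\lambda^l_\alpha$, $\alpha=1,\dots,N_l$, and because the covariance $\bm\Sigma^l$ is summed over the neuron index it is common to all of them, so each integral contributes a factor $|\bm\Sigma^l|^{-1/2}$. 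The product over neurons is therefore $|\bm\Sigma^l|^{-N_l/2}$, i.e.\ an energy cost $\tfrac{N_l}{2}\ln|\bm\Sigma^l|$; the compact notation $\tfrac12\ln|\bm\Sigma^l|$ in eq.~\eqref{seq:stochastic_partition_full} suppresses this per-neuron multiplicity, which becomes explicit once $\bm\Sigma^l$ is scalar. This is the only step that cannot be read off by a mechanical scalar substitution and instead requires going back to the $\bm\lambda$-integration.

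Finally I would collect the pieces. The $l=0$ log-determinant, $N_0\ln(\sigma_w^2 N_0^{-1}x_\beta x_\beta + \sigma_b^2)$, depends only on the fixed input $x$ and is thus a configuration-independent additive constant that is absorbed into the normalization of $Q$; after relabeling the summation index the remaining log terms are written over $l=1,\dots,L$ as in eq.~\eqref{seq:stochastic_partition_full_single_input}. Substituting all scalar expressions back into eq.~\eqref{seq:stochastic_partition_full}, factoring out the common $-\tfrac12$, and dropping the sample index then reproduces eq.~\eqref{seq:stochastic_partition_full_single_input}, completing the reduction.
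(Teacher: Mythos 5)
Your proposal is correct and proceeds by the same route as the paper, whose entire proof is the single sentence that the result ``follows directly from the previous result by plugging in for only a single input.'' The one place where you go beyond the paper is also the most valuable part of your write-up: you notice that a literal scalar substitution into eq.~\eqref{eq:stochastic_partition_full} yields $\ln\Sigma^l$ rather than $N_l\log\Sigma^l$, and you resolve the discrepancy by returning to the Gaussian integration over the auxiliary fields in eq.~\eqref{eq:partition_function_constraints_rewrite}: the vectors $\bm\lambda^l_\alpha$ decouple across the neuron index $\alpha$ while sharing the common covariance $\bm\Sigma^l$, so integrating them out produces $|\bm\Sigma^l|^{-N_l/2}$, i.e.\ an energy contribution $\tfrac{N_l}{2}\ln|\bm\Sigma^l|$. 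This is exactly the factor needed for the corollary, and it makes explicit a per-neuron multiplicity that is suppressed in the theorem's displayed formula and never addressed by the paper's one-line proof. Your treatment of the $l=0$ log term as an $x$-dependent additive constant absorbed into the normalization is also right. The only residual mismatch --- the corollary writes the log arguments as $\phi^l(z^l_\alpha)$ whereas the derivation produces $\phi^l(z^{l-1}_\alpha)$ (equivalently, a shifted index range with shifted widths) --- is an inconsistency in the paper's own statement, harmless in the constant-width ring setting it feeds into, and not a gap in your argument.
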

\begin{proof}
This result follows directly from the previous result by plugging in for only a single input.
\end{proof}

\subsection{Theoretical Results on Linear Stochastic Networks}\label{app:linear_network}

Here we prove several results elucidating the behavior of the linear stochastic network on a ring. We will begin with the full partition function for the linear stochastic network,
\begin{equation}
Q  = \int[dz]\exp\left[-\frac12 \sum_{l=0}^L\left\{\frac{z^l_\alpha z^l_\alpha}{\sigma_w^2N^{-1}z^{l-1}_\beta z^{l-1}_\beta + \sigma_b^2} + N\log(\sigma_w^2 N^{-1}z^l_\alpha z^l_\alpha + \sigma_b^2)\right\}\right].\label{eq:linear_partition_full}
\end{equation}
Our first result concerns the change of variables into hyperspherical coordinates. We will denote the radius to be $r^l$.

\begin{result}
The energy for the stochastic linear network on a ring can be changed into hyperspherical coordinates. The resulting lattice model is described by the energy,
\begin{equation}
\mathcal L(\{r^l\}) = \frac12\sum_{l=0}^L\left\{\frac{(r^l)^2}{\sigma_w^2N^{-1}(r^{l-1})^2 + \sigma_b^2} - N\log\left(\frac{(r^l)^2}{\sigma_w^2N^{-1}(r^l)^2 + \sigma_b^2}\right)\right\}\label{eq:energy_linear_stochastic}
\end{equation}
where $r^l$ is the norm of the pre-activation in layer $l$.
\end{result}
\begin{proof}
We proceed by simply making the change of variables in eq.~\eqref{eq:linear_partition_full}. Since the integrand is isotropic we express the integral over angles in layer $l$ by $d\Omega^l$. However we note that the angular integrals will change the partition function by at most a constant and so may be discarded.
\begin{align}
Q  &= \int[dz]\exp\left[-\frac12 \sum_{l=0}^L\left\{\frac{z^l_\alpha z^l_\alpha}{\sigma_w^2N^{-1}z^{l-1}_\beta z^{l-1}_\beta + \sigma_b^2} + N\log(\sigma_w^2 N^{-1}z^l_\alpha z^l_\alpha + \sigma_b^2)\right\}\right]\\
&= \prod_{l=0}^L\int[dz^l]\exp\left[-\frac12 \sum_{l=0}^L\left\{\frac{z^l_\alpha z^l_\alpha}{\sigma_w^2N^{-1}z^{l-1}_\beta z^{l-1}_\beta + \sigma_b^2} + N\log(\sigma_w^2 N^{-1}z^l_\alpha z^l_\alpha + \sigma_b^2)\right\}\right]\\
&=  \prod_{l=0}^L\int dr^ld\Omega^l(r^l)^{N-1}\exp\Bigg[-\frac12 \sum_{l=0}^L\Bigg\{\frac{(r^l)^2}{\sigma_w^2N^{-1}(r^{l-1})^2 + \sigma_b^2}\nonumber\\ 
&\hspace{14pc} + N\log(\sigma_w^2 N^{-1}(r^l)^2+ \sigma_b^2)\Bigg\}\Bigg]\\
&\approx \int [dr] exp\left[-\frac12 \sum_{l=0}^L\left\{\frac{(r^l)^2}{\sigma_w^2N^{-1}(r^{l-1})^2 + \sigma_b^2} - N\log\left(\frac{(r^l)^2}{\sigma_w^2 N^{-1}(r^l)^2 + \sigma_b^2}\right) \right\}\right].\label{eq:linear_partition_norm}
\end{align} 
The definition of the energy follows immediately. Here we replace $N-1$ by $N$ for convenience since typically $N\gg 1$.
\end{proof}

We now discuss the saddle point approximation to eq.~\eqref{eq:linear_partition_norm}. We begin our discussion by that when $\sigma_w^2 < 1$, eq.~\eqref{eq:energy_linear_stochastic} is minimized by a uniform arrangement of spins, $r^l = r^*$ independent of layer.

\begin{result}
When $\sigma_w^2 < 1$, there exists a constant configuration of spins, with $r^l = r^*$ independent of layer, that minimizes the energy for the stochastic neural network on a ring, given by eq~\eqref{eq:energy_linear_stochastic}. The constant solution is given by,
\begin{equation}
r^* = \sqrt{\frac{N\sigma_b^2}{\Delta_w}}\label{eq:linear_stochastic_fixed_point}
\end{equation}
where $\Delta_w = 1-\sigma_w^2$.
\end{result}
\begin{proof}
When $r^l = r^*$ independent of layer, eq.~\eqref{eq:energy_linear_stochastic} will be given by,
\begin{equation}
\mathcal L(r^*) = \frac L2\left[\frac{(r^*)^2}{\sigma_w^2N^{-1}(r^*)^2 + \sigma_b^2} - N \log\left(\frac{(r^*)^2}{\sigma_w^2N^{-1}(r^*)^2 + \sigma_b^2}\right)\right].\label{eq:energy_linear_stochastic_uniform}
\end{equation}
Note that this equation has the form $x-N\log x$ which has a minimum when $x = N$. It follows that eq.~\eqref{eq:energy_linear_stochastic_uniform} will have a minimum precisely when
\begin{equation}
(r^*)^2 = \sigma_w^2(r^*)^2 + N\sigma_b^2
\end{equation}
as required.
\end{proof}

Next we can expand eq.~\eqref{eq:energy_linear_stochastic} in small nonuniform fluctuations about $r^*$. 
\begin{result}
Small fluctuations about eq.~\eqref{eq:linear_stochastic_fixed_point}, given by $r^l = r^* + \epsilon^l$, are governed by the energy,
\begin{equation}
\mathcal L(\{r^*+\epsilon^l\}) = \mathcal L(r^*) + \frac{\Delta_w}{\sigma_b^2}\sum_{l=0}^L\left[(1+\sigma_w^4)(\epsilon^l)^2 - 2\sigma_w^2\epsilon^l \epsilon^{l-1}\right] + \mathcal O(\epsilon^4). \label{eq:energy_linear_stochastic_fluctuations}
\end{equation}
\end{result} 
\begin{proof}
We consider the cost of small fluctuations about the constant solution so that $r^l = r^*+\epsilon^l$ where $\epsilon^l\ll r^*$. For notational simplicity we write $D = \sigma_w^2N^{-1}(r^*)^2+\sigma_b^2$ and $\alpha = \sigma_w^2N^{-1}$. We then expand the perturbation to the energy to quadratic order about $r^*$ to find,
\begin{align}
\mathcal L(\{r^* + \epsilon^l\}) &= \sum_{l=0}^L\left(\frac12\frac{(r^l)^2}{\sigma_w^2N^{-1}(r^{l-1})^2+\sigma_b^2} - N\log r^l+\frac N2\log(\sigma_w^2N^{-1}(r^l)^2+\sigma_b^2)\right)\\
&= \sum_{l=0}^L\Bigg\{\frac12\frac{(r^*)^2+2(r^*)\epsilon^l+(\epsilon^l)^2}{\sigma_w^2N^{-1}(r^*)^2+\sigma_b^2+\sigma_w^2N^{-1}\epsilon^{l-1}(2(r^*)+\epsilon^{l-1})}-N\log (r^*)\nonumber\\
&\hspace{3pc} - N\log\left(1+\frac{\epsilon^l}{(r^*)}\right)+ \frac N2\log\left(\sigma_w^2N^{-1}(r^*)^2+\sigma_b^2\right)\nonumber\\
&\hspace{3pc} + \frac N2\log\left(1+\frac{\sigma_w^2N^{-1}\epsilon^l(2(r^*)+\epsilon^l)}{\sigma_w^2N^{-1}(r^*)^2+\sigma_b^2}\right)\Bigg\}\\
&= \mathcal L(r^*) + \sum_{l=0}^L\Bigg\{-\frac\alpha{D^2}(r^*)^3\epsilon^{l-1} + \left(\frac {r^*}D - \frac N{r^*} + N\frac\alpha D r^*\right)\epsilon^l \nonumber\\
&\hspace{6.5pc} + \frac{\alpha z^2}{2D^2}\left(\frac{4\alpha}D(r^*)^2-1\right)(\epsilon^{l-1})^2 - 2\frac{\alpha (r^*)^2}{D^2}\epsilon^l\epsilon^{l-1}\nonumber \\ 
&\hspace{6.5pc} + \left(\frac 1{2D}+\frac{N}{2(r^*)^2} + \frac{N\alpha}{2D} -\frac{N\alpha^2}{D^2}(r^*)^2\right)(\epsilon^l)^2\Bigg\}.
\end{align}
Next we substitute in for $r^*$ noting that $D = (r^*)^2/N$. It follows that,
\begin{align}
\mathcal L(\{r^*+\epsilon^l\})-\mathcal L(r^*) &= \sum_{l=0}^L\Bigg\{-\frac{\alpha N^2}{r^*}\epsilon^{l-1} + \frac{\alpha N^2}{r^*}\epsilon^l + \frac{\alpha N^2}{2(r^*)^2}(4N\alpha - 1)(\epsilon^{l-1})^2  \nonumber\\
&\hspace{3pc}- 2\frac{\alpha N^2}{(r^*)^2}\epsilon^l\epsilon^{l-1}+ \left(\frac{N}{(r^*)^2} + \frac{N^2\alpha}{2(r^*)^2} - \frac{N^3\alpha^2}{(r^*)^2}\right)(\epsilon^l)^2\Bigg\}.
\end{align}
We note that each term in the sum appears twice - once from the $l$ term and once from the $l+1$ term - except for $\epsilon^l\epsilon^{l-1}$. We may therefore reorganize the sum to symmetrize the different pieces. As a result we note that all the terms linear in $\epsilon^l$ vanish. Substituting in for $z^*$ we find,
\begin{align}
\mathcal L(\{r^* + \epsilon^l\})-\mathcal L(r^*) &=\sum_{l=0}^L\Bigg\{\left(\frac{N}{(r^*)^2}+\frac{N^3\alpha^2}{(r^*)^2}\right)(\epsilon^l)^2 - 2\frac{\alpha N^2}{(r^*)^2}\epsilon^l\epsilon^{l-1}\Bigg\}\\
&=\frac{\Delta_w}{\sigma_b^2}\sum_{l=0}^L\left[(1+\sigma_w^4)(\epsilon^l)^2 - 2\sigma_w^2\epsilon^l\epsilon^{l-1}\right]
\end{align}
as required. 
\end{proof}
Examining eq.~\eqref{eq:energy_linear_stochastic_fluctuations} we note, among other things, that as $\sigma_w^2\to 1$ the cost of fluctuations goes to zero. We have successfully constructed a linear field theory for small fluctuations in the stochastic linear network for $\sigma_w^2 <1$. It is important to note that if it were desirable one could continue the expansion to higher order. This would give you perturbative corrections to the linear theory that we expect to be $\mathcal O(N^{-1})$. One could imagine using this expansion to study the effect of finite width networks.

Next we show that eq.~\eqref{eq:energy_linear_stochastic_fluctuations} can be diagonalized by switching to Fourier basis. Because our network is topologically equivalent to a ring we can always expand $\epsilon^l$ in Fourier series to get,
\begin{equation}
\epsilon^l = \sum_q \epsilon_q e^{-iql}.
\end{equation}
Since $\epsilon^{L+1} = \epsilon^0$ it follows that $q = 2n\pi/(L+1)$ for $n\in\mathbb Z$. The depth of our network therefore determines the highest frequency fluctuations that we will be able to observe.

\begin{result}
Replacing $\epsilon^l$ in eq.~\eqref{eq:energy_linear_stochastic_fluctuations} by its Fourier series we get an energy,
\begin{equation}
\mathcal L(\{r^* + \epsilon_q\}) - \mathcal L(r^*) = \frac{L\Delta_w}{\sigma_b^2}\sum_q\left\{(1+\sigma_w^4)-2\sigma_w^2\cos q\right\}|\epsilon_q|^2\label{eq:energy_linear_stochastic_fluctuations_fourier}. 
\end{equation}
The associated probability distribution is factorial Gaussian. It follows that the different Fourier modes of fluctuations in a deep linear network behave as uncoupled Gaussian random variables.
\end{result}
\begin{proof}
It follows that we may rewrite eq.~\ref{eq:energy_linear_stochastic_fluctuations} as,
\begin{align}
\mathcal L(\{r^* + \epsilon^l\}) - \mathcal L(r^*) &= \frac{\Delta_w}{\sigma_b^2}\sum_{l=0}^L\left((1+\sigma_w^4)(\epsilon^l)^2 - 2\sigma_w^2\epsilon^l\epsilon^{l-1}\right)\\
&=\frac{\Delta_w}{\sigma_b^2}\sum_{l=0}^L\sum_{qq'}\left((1+\sigma_w^4)\epsilon_q\epsilon_{q'}e^{-i(q+q')l} - 2\sigma_w^2\epsilon_q\epsilon_{q'}e^{-i(q+q')l}e^{iq}\right)\\
&=\frac{\Delta_w}{\sigma_b^2}\sum_{qq'}\left((1+\sigma_w^4)\epsilon_q\epsilon_{q'}- 2\sigma_w^2\epsilon_q\epsilon_{q'}e^{iq}\right)\sum_{l=0}^Le^{-i(q+q')l}\\
&=\frac{\Delta_w}{\sigma_b^2}\sum_{qq'}\left((1+\sigma_w^4)\epsilon_q\epsilon_{q'}- 2\sigma_w^2\epsilon_q\epsilon_{q'}e^{iq}\right)\delta_{q,-q'}\\
&=\frac{L\Delta_w}{\sigma_b^2}\sum_{q}\left((1+\sigma_w^4)\epsilon_q\epsilon_{-q}- 2\sigma_w^2\epsilon_q\epsilon_{-q}e^{iq}\right)
\end{align}
where we have used the exponential representation of the $\delta$-function,
\begin{equation}
\sum_le^{-iql} = L\delta_{q,0}.
\end{equation}
Finally note that since $\epsilon^l$ is real it must be true that $\epsilon_{-q} = \epsilon_q^\dag$. It follows that,
\begin{align}
\mathcal L(\{r^* + \epsilon^l\}) - \mathcal L(r^*) &= \frac{L\Delta_w}{\sigma_b^2}\sum_{q}\left((1+\sigma_w^4)\epsilon_q\epsilon_{-q}- 2\sigma_w^2\epsilon_q\epsilon_{-q}e^{iq}\right)\\
&=\frac{L\Delta_w}{\sigma_b^2}\sum_{q}\left((1+\sigma_w^4)- 2\sigma_w^2e^{iq}\right)|\epsilon_q|^2\\
&=\frac{L\Delta_w}{\sigma_b^2}\sum_{q}\left((1+\sigma_w^4)- 2\sigma_w^2\cos q\right)|\epsilon_q|^2
\end{align} 
where in the last step we have rearranged the sum to pair each mode with its complex conjugate.
\end{proof}

The final theoretical result for this section shows that the long-distance behavior of the linear stochastic network can be well described by an effective field theory.

\begin{result}
Long range fluctuations of the stochastic linear network (i.e. fluctuations in which $\epsilon^l$ varies slowly on the scale of one layer) are governed by the effective field theory defined by the energy,
\begin{equation}
U[\epsilon(x)] = \frac{\Delta_w}{\sigma_b^2}\int dx\left[\Delta_w^2(\epsilon(x))^2 + \sigma_w^2\left(\frac{\partial \epsilon(x)}{\partial x}\right)^2\right].
\end{equation}
\end{result}
\begin{proof}
Note that we can rewrite eq.~\eqref{eq:energy_linear_stochastic_fluctuations} as,
\begin{align}
U(\{\epsilon^l\}) &= \frac{\Delta_w}{\sigma_b^2}\sum_{l=0}^L\left[(1+\sigma_w^4)(\epsilon^l)^2 - 2\sigma_w^2\epsilon^l \epsilon^{l-1}\right]\\
&=\frac{\Delta_w}{\sigma_b^2}\sum_{l=0}^L\left[(1-2\sigma_w^2+\sigma_w^4)(\epsilon^l)^2 + \sigma_w^2(\epsilon^l - \epsilon^{l-1})^2\right]\\
&=\frac{\Delta_w}{\sigma_b^2}\sum_{l=0}^L\left[\Delta_w^2(\epsilon^l)^2 + \sigma_w^2(\epsilon^l - \epsilon^{l-1})^2\right].
\end{align}
Let us now suggestively write $\epsilon(l) = \epsilon^l$. If $\epsilon^l$ is varying slowly on the scale of individual layers and further if $L\gg 1$ then we can approximate,
\begin{equation}
\frac{\epsilon^l - \epsilon^{l-1}}{1} \approx \frac{\partial \epsilon(l)}{\partial l}.
\end{equation}
We can additionally interpret the sum over layers as a Riemann sum. This yields the effective field theory for long-wavelength fluctuations,
\begin{equation}
U[\epsilon(x)] \approx \frac{\Delta_w}{\sigma_b^2}\int dx\left[\Delta_w^2\epsilon^2(x) + \sigma_w^2\left(\frac{\partial \epsilon}{\partial x}\right)^2\right]
\end{equation}
with the replacement $l\to x$.
\end{proof}

\subsection{Numerical Results on Linear Stochastic Networks}\label{app:linear_network_numerics}

We now provide a more detailed description of the numerical methods discussed in the main text. We would like to sample the mean pre-activation and fluctuations about the mean for deep and wide linear stochastic networks on a ring. However, in practice it is easier to consider a linear topology (non-ring) and consider the ``bulk'' fluctuations and mean of the pre-activations after any transient from the input to the network has decayed. In general we consider constant width networks with $N = 200$ and $L = 1024$. 

To generate a single sample of pre-activations from the ensemble of linear stochastic networks with $J_D = 0$, we randomly initialize the weights and biases according to $W_{ij}^l\sim\mathcal N(0,\sigma_w^2/N)$ and $b^l_i\sim\mathcal N(0,\sigma_b^2)$. We then feed a random input into the network and record the norm of the pre-activations after each layer. By repeating this process $M = 200$ times we are able to get Monte-Carlo samples from the ensemble of pre-activations for linear stochastic neural networks. 
\begin{figure}[!h]                                                              
\includegraphics[width=0.43\textwidth]{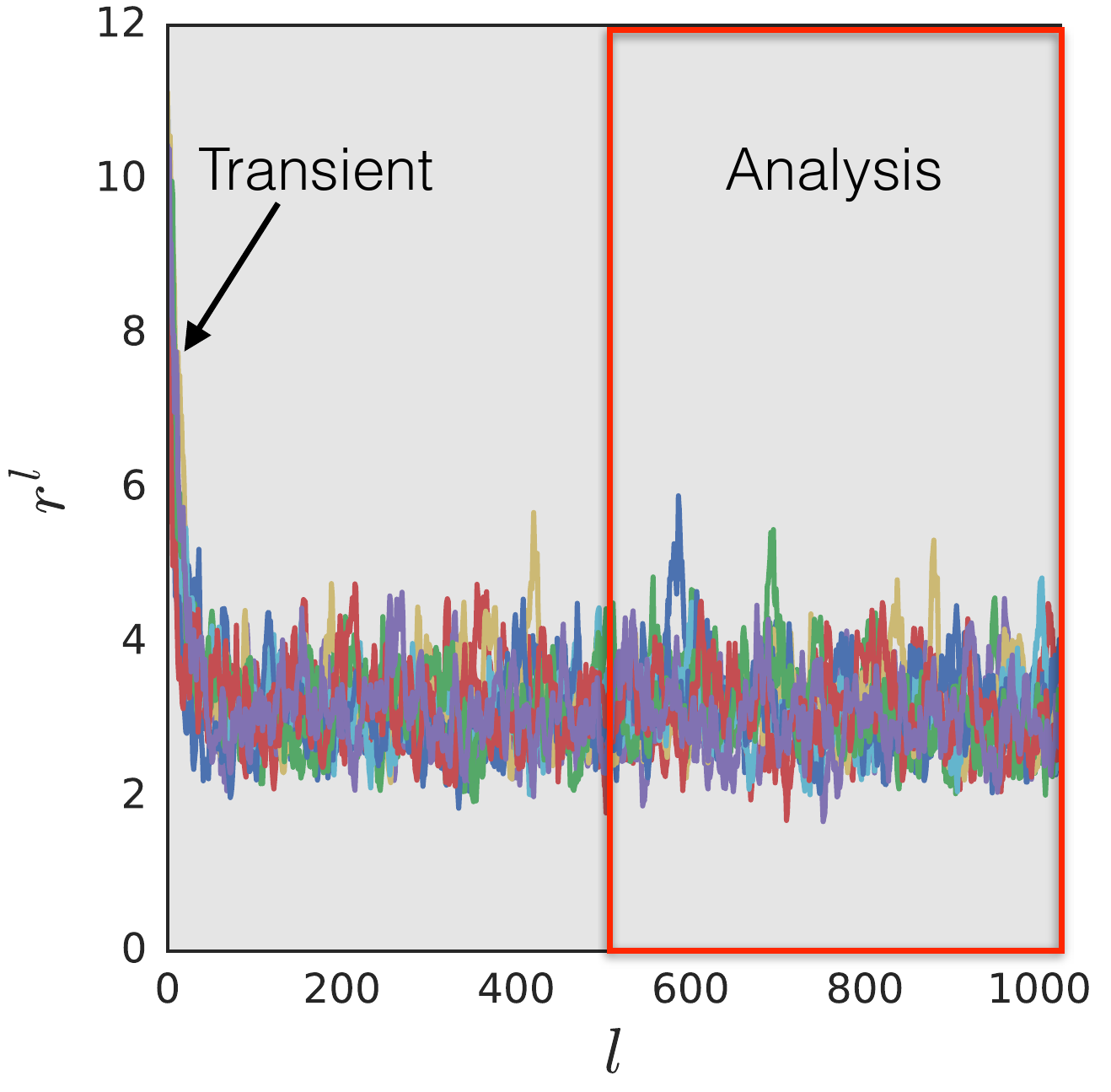} \hspace{3pc} \includegraphics[width=0.43\textwidth]{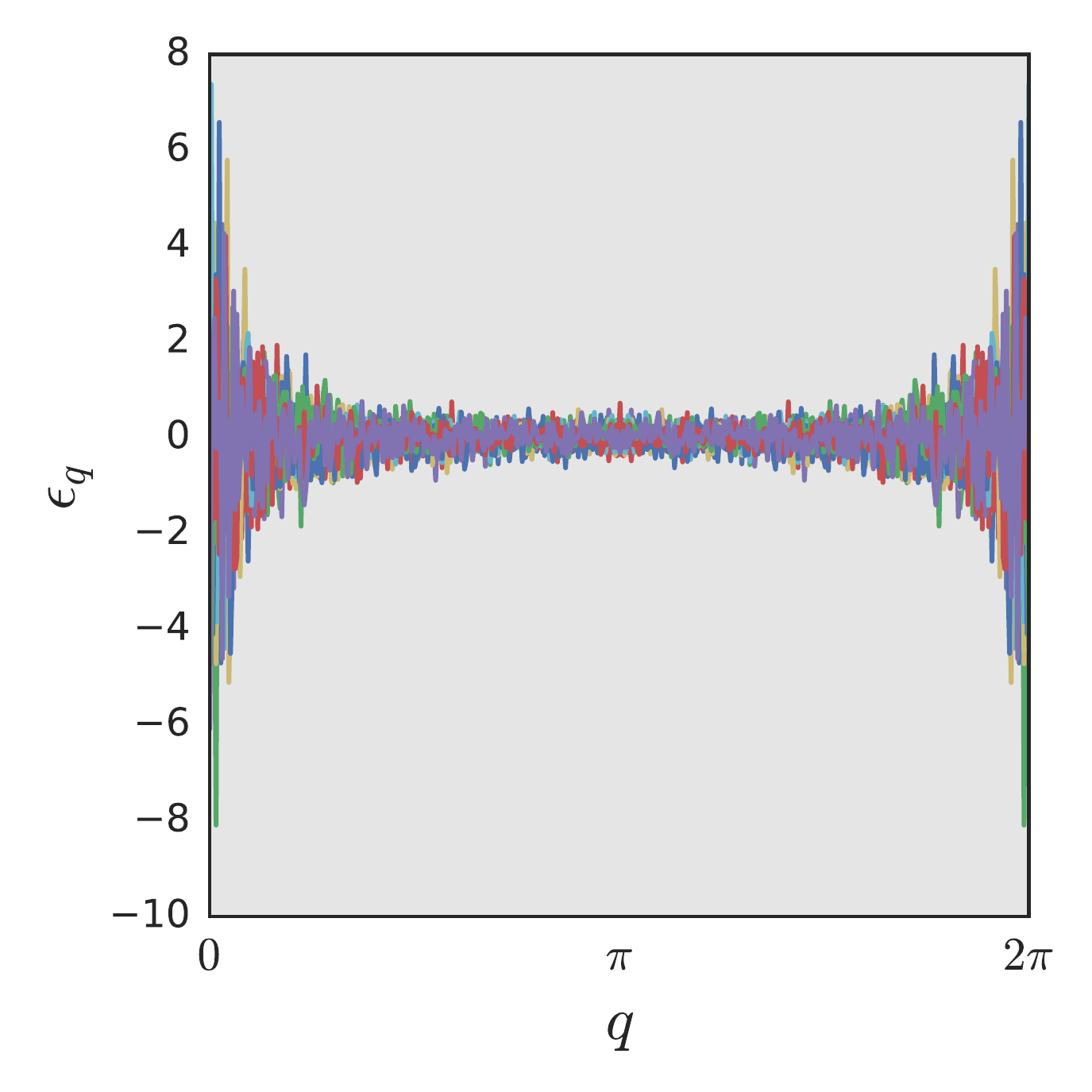}                     
\centering                                                                      
\caption{Samples from an $L=1024$ layer stochastic linear neural network with $J_D = 0$, $N = 200$, $\sigma_b^2 = 0.01$, and $\sigma_w^2 = 0.9$. (a) Real space values of the norm as it travels through different instantiations of the network. We notice that early in the network there is a transient signal after which the pre-activations reach their ``bulk'' behaviour. We separate the signal propagation into two regions: a transient region and an analysis region. (b) The Fast Fourier Transform of the pre-activations in the analysis region of (a) for different instantiations of the network.}
\label{fig:app_stochastic_network_samples}                                          
\end{figure}   
Fig.~\ref{fig:app_stochastic_network_samples} (a) shows the norm of the pre-activations at different layers of a stochastic linear network for different random instantiations of the weights. This plot therefore shows different samples from the ensemble of stochastic linear networks. We notice that there is a transient effect of the input that lasts for around 100 layers. To perform our analysis and make a correspondence between the stochastic network on a ring we would like to only consider the ``bulk'' behaviour of the network. To this end we divide the trajectory of the norm of pre-activations into two halves and study only the half from layer 512 to layer 1024. We anticipate for all values of $\sigma_w^2$ studied the transient ought to have decayed by this point.

In Fig.~\ref{fig:app_stochastic_network_samples} (b) we show the Fast Fourier Transform (FFT) for the fluctuations of the instantiations of the norm of the pre-activations about the theoretical mean. We notice that the Fourier modes - as with the real space fluctuations, the Fourier modes are also stochastic. There is clearly a change in the variance of the modes as a function of wavevector.
\begin{figure}[!h]                                                              
\includegraphics[width=0.6\textwidth]{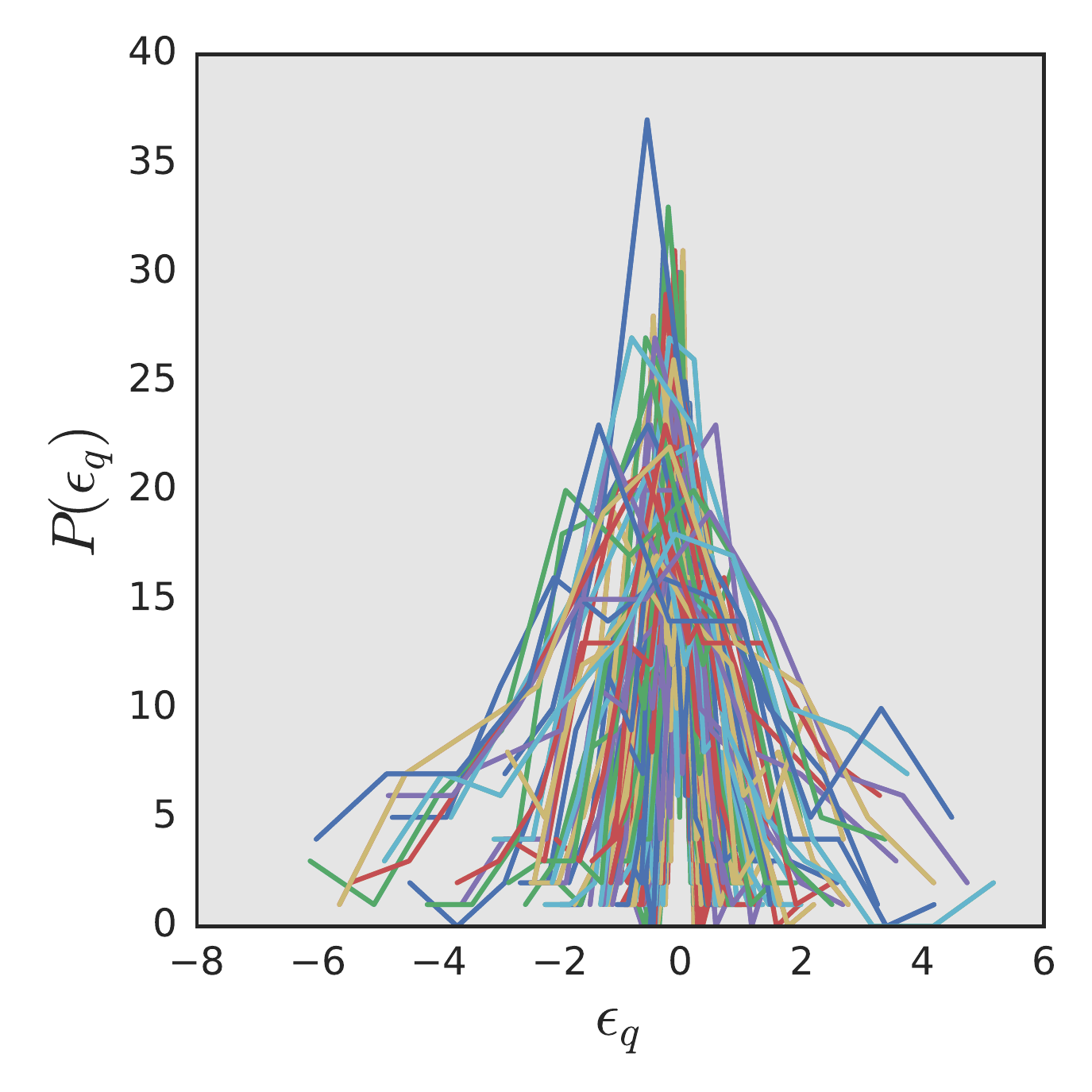}                     
\centering                                                                      
\caption{The distribution of small fluctuations, $\epsilon_q$. Different curves represent different values of $q$. }
\label{fig:stochastic_network_histogram}                                          
\end{figure} 
Fig.~\ref{fig:stochastic_network_histogram} shows histograms of $\epsilon_q$ for different values of $q$.  
\begin{figure}[!h]                                                              
\includegraphics[width=0.43\textwidth]{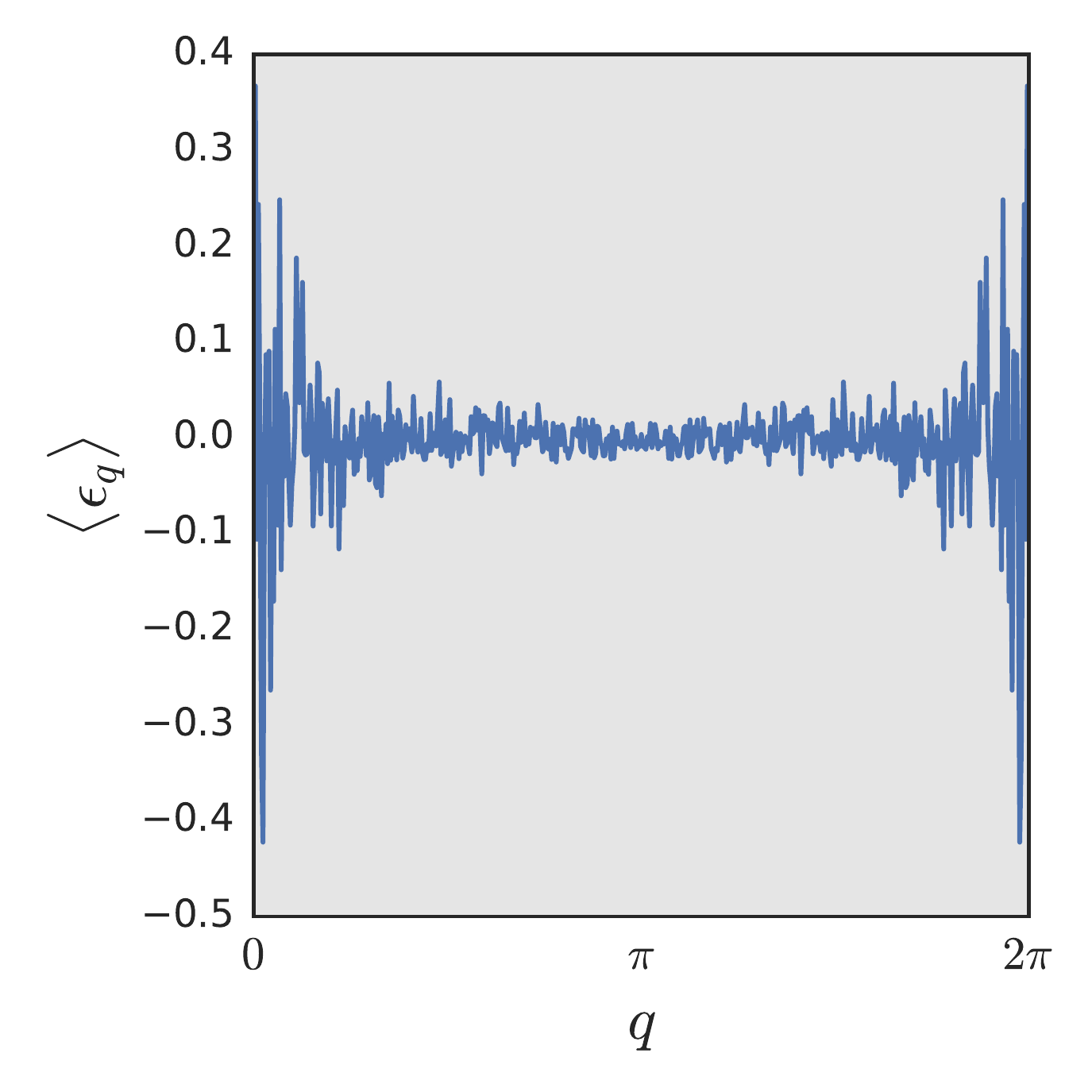} \hspace{3pc} \includegraphics[width=0.43\textwidth]{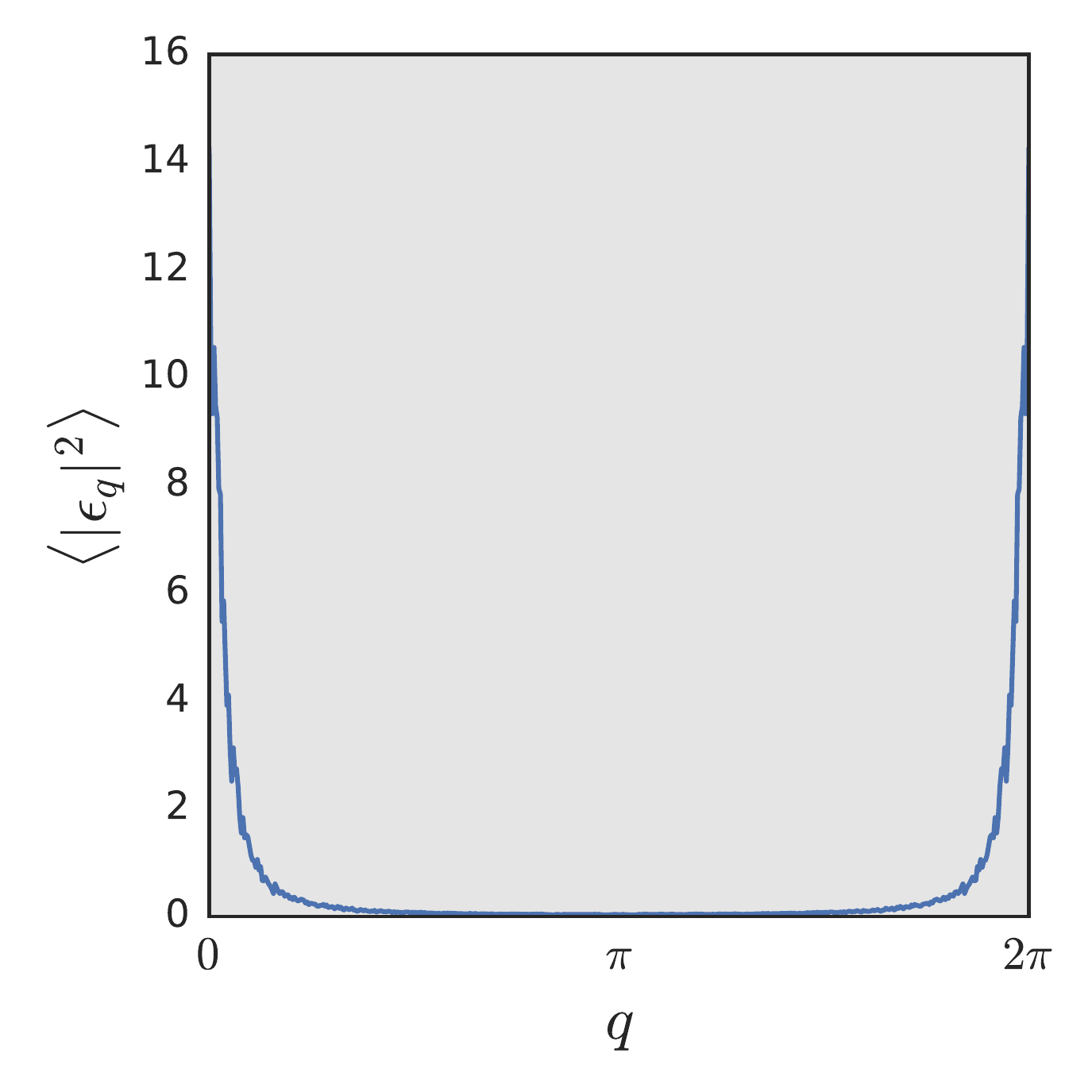}                     
\centering                                                                      
\caption{Statistics of small fluctuations. (a) Shows the expected value of $\epsilon_q$ and (b) shows the variance, $\langle\epsilon_q^\dag \epsilon_q\rangle $.}
\label{fig:stochastic_network_mean_variance}                                          
\end{figure} 
Fig.~\ref{fig:stochastic_network_mean_variance} (a) shows the mean of the distribution of $\epsilon_q$ as a function of wavevector. It is clear that $\epsilon_q$ has mean zero everywhere but the uncertainty in the measurement of the mean increases as $q\to 0$.  In fig.~\ref{fig:stochastic_network_mean_variance} (b) we finally plot the variance of small fluctuations. This is what is plotted against the theoretical prediction in the main text.

\subsection{Theoretical Results on Rectified Linear Stochastic Networks}\label{app:rectified_linear_network}

Next we discuss the case of a rectified linear network. For rectified linear units we begin by noting that any vector $z^l$ can be decomposed into its positive and negative components as $z^l = z^l_+ + z^l_-$. With this decomposition in mind we can write the squared norm of $z^l$ as $|z^l|^2 = |z^l_-|^2 + |z^l_+|^2$. The partition function can therefore be written as
\begin{equation}\label{eq:partition_relu_raw}
Q = \int [dz]\exp\left[-\frac12\sum_{l=0}^L\left(\frac{|z_+^l|^2 + |z^l_-|^2}{\sigma_w^2N^{-1}|z^{l-1}_+|^2 + \sigma_b^2} + N\log(\sigma_w^2N^{-1}|z^l_+|^2+\sigma_b^2)\right)\right].
\end{equation}
We wish to perform a change of variables into hyperspherical coordinates as before. Unlike in the linear case, here we must be more careful since the probability distribution is anisotropic. This leads us to our first result.

\begin{result}
The partition function for rectified linear stochastic networks can bet written as,
\begin{align}
Q &= 2\left(\frac{\sqrt\pi}{2}\right)^N\prod_l \sum_{k_l=0}^N{N\choose k_l}\frac1{\Gamma\left(\frac{N-k_l}2\right)\Gamma\left(\frac{k_l}2\right)}\int dr^l_+dr^l_-(r^l_+)^{N-k_l-1}(r^l_-)^{k_l-1}\nonumber\\
&\hspace{3pc}\times\exp\left[-\frac12\sum_{l=0}^L\left(\frac{(r_+^l)^2 + (r^l_-)^2}{\sigma_w^2N^{-1}(r^{l-1}_+)^2 + \sigma_b^2} + N\log(\sigma_w^2N^{-1}(r^l_+)^2+\sigma_b^2)\right)\right]\label{eq:partition_function_relu_orthants}
\end{align}
by making a hyperspherical coordinate transformation in both $z^l_+$ and $z^l_-$ separately. Here $r_+^l$ is the norm of the positive components of the pre-activations in layer $l$, $r_-^l$ is the norm of the negative components of the pre-activations, and $k^l$ is the number of components of the pre-activations that are positive.
\end{result}
\begin{proof}
The partition function in eq~\eqref{eq:partition_relu_raw} can be decomposed into a sum over orthants for each layer separately as,
\begin{align}
Q &= \int\prod_l [dz^l]\exp\left[-\frac12\sum_{l=0}^L\left(\frac{|z_+^l|^2 + |z^l_-|^2}{\sigma_w^2N^{-1}|z^{l-1}_+|^2 + \sigma_b^2} + N\log(\sigma_w^2N^{-1}|z^l_+|+\sigma_b^2)\right)\right]\nonumber\\
&= \frac1{2^N}\prod_l \sum_{k_l=0}^N\int[dz^l_+][dz^l_-]{N\choose k_l}\exp\Bigg[-\frac12\sum_{l=0}^L\Bigg(\frac{|z_+^l|^2 + |z^l_-|^2}{\sigma_w^2N^{-1}|z^{l-1}_+|^2 + \sigma_b^2} \nonumber\\ 
&\hspace{14pc}+ N\log(\sigma_w^2N^{-1}|z^l_+|+\sigma_b^2)\Bigg)\Bigg]
\end{align}
where $N \choose k_l$ counts the number of orthants with $k_l$ positive components. In the above $z^l_+$ is integrated over $\mathbb R^{N-k_l}$ and $z^l_-$ is integrated over $\mathbb R^{k_l}$. 

In each orthant, the integrand is spherically symmetric over $z^l_+$ and $z^l_-$ separately. We may therefore make two change of variables into spherical coordinates for both $z^l_+$ and $z^l_-$ respectively.
\begin{align}
Q &= \frac1{2^N}\prod_l \sum_{k_l=0}^N\int[dz^l_+][dz^l_-]{N\choose k_l}\exp\Bigg[-\frac12\sum_{l=0}^L\Bigg(\frac{|z_+^l|^2 + |z^l_-|^2}{\sigma_w^2N^{-1}|z^{l-1}_+|^2 + \sigma_b^2} \nonumber\\ 
&\hspace{14pc}+ N\log(\sigma_w^2N^{-1}|z^l_+|+\sigma_b^2)\Bigg)\Bigg]\\
 &= \frac1{2^N}\prod_l \sum_{k_l=0}^N{N\choose k_l}\int dr^l_+ dr^l_-d\Omega_+d\Omega_- (r^l_+)^{N-k_l-1}(r^l_-)^{k_l-1}\nonumber\\ 
 &\hspace{3pc}\times\exp\left[-\frac12\sum_{l=0}^L\left(\frac{|z_+^l|^2 + |z^l_-|^2}{\sigma_w^2N^{-1}|z^{l-1}_+|^2 + \sigma_b^2} + N\log(\sigma_w^2N^{-1}|z^l_+|+\sigma_b^2)\right)\right]\\
 &= 2\left(\frac{\sqrt\pi}{2}\right)^N\prod_l \sum_{k_l=0}^N{N\choose k_l}\frac1{\Gamma\left(\frac{N-k_l}2\right)\Gamma\left(\frac{k_l}2\right)}\int dr^l_+dr^l_-(r^l_+)^{N-k_l-1}(r^l_-)^{k_l-1}\nonumber\\
&\hspace{3pc}\times\exp\left[-\frac12\sum_{l=0}^L\left(\frac{(r_+^l)^2 + (r^l_-)^2}{\sigma_w^2N^{-1}(r^{l-1}_+)^2 + \sigma_b^2} + N\log(\sigma_w^2N^{-1}(r^l_+)^2+\sigma_b^2)\right)\right]
\end{align}
Where $d\Omega_+$ and $d\Omega_-$ are angular integrals over the positive and negative components respectively. In the final step we have integrated over the angular piece explicitly to give a volume factor which, crucially, depends on $k_l$. 
\end{proof} 

We now consider the $N\to\infty$ limit of eq.~\eqref{eq:partition_function_relu_orthants}. Unlike in the linear case, we must take some care when applying the saddle point approximation here. In particular, we would like to first get the partition function into a form that is more amenable to analysis. Our first step will therefore be to construct a continuum approximation for $k_l$.

\begin{result}
As $N\to\infty$ the sum over orthants in eq.~\eqref{eq:partition_function_relu_orthants} can be converted to an integral and the $\Gamma$ functions can be approximated to give,
\begin{align}
Q &= \int[dk_l][dr_+^l][dr_-^l]\exp\Bigg[-\frac12\sum_{l=0}^L\Bigg(\frac{(r_+^l)^2 + (r^l_-)^2}{\sigma_w^2N^{-1}(r^{l-1}_+)^2 + \sigma_b^2} + k_l(3\log k_l - 2\log r_-^l) \nonumber\\
&\hspace{4pc}+ N\log(\sigma_w^2N^{-1}(r^l_+)^2+\sigma_b^2) + (N-k_l)(3\log(N-k_l) - 2\log r_+^l)\Bigg)\Bigg]\label{eq:partition_relu_continuum}
\end{align}
where $k_l$ is now a continuously varying field.
\end{result}
\begin{proof}
In the large $N$ limit we first note that the sum over orthants will concentrate about $k_l = N/2$. Moreover the product of binomial coefficients and $\Gamma$ functions can be approximated using Stirling's approximation. We therefore aim to approximate the sum by an integrand in the large $N$ limit. To do this first define $\Delta k = (2/\sqrt{\pi})^N$ noting that $\Delta k\to 0$ as $N\to\infty$. In the large $N$ limit the sum is identically a Riemann sum and so we may write (taking liberties to add/subtract 1 when convenient),
\begin{align}
Q &\approx 2\prod_l \int dk_l\frac{\Gamma(N+1)}{\Gamma(N-k_l+1)\Gamma(k_l+1)\Gamma\left(\frac{N-k_l}2\right)\Gamma\left(\frac{k_l}2\right)}\int dr^l_+ dr^l_-(r^l_+)^{N-k_l-1}(r^l_-)^{k_l-1}\nonumber\\
&\hspace{6pc}\times\exp\left[-\frac12\sum_{l=0}^L\left(\frac{(r_+^l)^2 + (r^l_-)^2}{\sigma_w^2N^{-1}(r^{l-1}_+)^2 + \sigma_b^2} + N\log(\sigma_w^2N^{-1}(r^l_+)^2+\sigma_b^2)\right)\right]\\
&\approx \prod_l \int dk_l\frac{1}{\Gamma(N-k_l+1)\Gamma(k_l+1)\Gamma\left(\frac{N-k_l}2+1\right)\Gamma\left(\frac{k_l}2+1\right)}\int dr^l_+ dr^l_-(r^l_+)^{N-k_l}(r^l_-)^{k_l}\nonumber\\&\hspace{6pc}\times\exp\left[-\frac12\sum_{l=0}^L\left(\frac{(r_+^l)^2 + (r^l_-)^2}{\sigma_w^2N^{-1}(r^{l-1}_+)^2 + \sigma_b^2}+ N\log(\sigma_w^2N^{-1}(r^l_+)^2+\sigma_b^2)\right)\right]\\
&\approx\int[dk_l][dr_+^l][dr_-^l]\left(\frac{2^{1/3}e}{N-k_l}\right)^{\frac32(N-k_l)}\left(\frac{2^{1/3}e}{k_l}\right)^{\frac32k_l}\exp\Bigg[-\frac12\sum_{l=0}^L\Bigg(\frac{(r_+^l)^2 + (r^l_-)^2}{\sigma_w^2N^{-1}(r^{l-1}_+)^2 + \sigma_b^2}\nonumber\\
&\hspace{3pc} + N\log(\sigma_w^2N^{-1}(r^l_+)^2+\sigma_b^2) - 2(N-k_l)\log r_+^l - 2k_l\log r_-^l\Bigg)\Bigg]\\
&= \int[dk_l][dr_+^l][dr_-^l]\exp\Bigg[-\frac12\sum_{l=0}^L\Bigg(\frac{(r_+^l)^2 + (r^l_-)^2}{\sigma_w^2N^{-1}(r^{l-1}_+)^2 + \sigma_b^2}+ N\log(\sigma_w^2N^{-1}(r^l_+)^2+\sigma_b^2) \nonumber\\&\hspace{6pc}+ (N-k_l)(3\log(N-k_l) - 2\log r_+^l) + k_l(3\log k_l - 2\log r_-^l)\Bigg)\Bigg].
\end{align}
Note that we have neglected the sub-leading constant factor in Stirling's approximation. Thus, we see that the anisotropy of the rectified linear unit causes us to have three independent fields that must be dealt with.
\end{proof}

We now compute the saddle point approximation to eq.~\eqref{eq:partition_relu_continuum}. As in the case of the stochastic linear network we begin by assuming the existence of a uniform solution with $r_+ = r^l_+$, $r_- = r^l_-$, and $k = k_l$. This leads us to our next result.

\begin{result}
For $\sigma_w^2 < 2$ rectified linear stochastic networks have a uniform configuration of fields that minimizes the energy. This minimum configuration has,
\begin{equation}
r_{+/-} = \sqrt{\frac{N\sigma_b^2}{2(1-\sigma_w^2/2)}} \hspace{5pc} k = \frac N2.
\end{equation}
\end{result}
\begin{proof}
In the case of the rectified linear network we notice that the energy function will be,
\begin{align}
\mathcal L = \frac12\sum_{l=0}^L\Bigg(&\frac{(r_+^l)^2 + (r^l_-)^2}{\sigma_w^2N^{-1}(r^{l-1}_+)^2 + \sigma_b^2}+ N\log(\sigma_w^2N^{-1}(r^l_+)^2+\sigma_b^2)\nonumber\\ &+ (N-k_l)(3\log(N-k_l) - 2\log r_+^l) + k_l(3\log k_l - 2\log r_-^l)\Bigg).
\end{align}
Given the anzats of a constant solution we seek a minimum satisfying the equations,
\begin{align}
&\frac{\partial H}{\partial r_+} = \frac{(1+\sigma_w^2)r_+}{\sigma_w^2N^{-1}r_+^2 + \sigma_b^2} - \frac{r_+^2 + r_-^2}{(\sigma_w^2 N^{-1}r_+^2 + \sigma_b^2)^2}\sigma_w^2N^{-1}r_+ - \frac{N-k_l}{r_+^l} = 0\label{eq:relu_saddle_1}\\
&\frac{\partial H}{\partial r_-} = \frac{r_-}{\sigma_w^2 N^{-1}r_+^2 + \sigma_b^2} - \frac k{r_-} = 0\label{eq:relu_saddle_2}\\
&\frac{\partial H}{\partial k}  = \log r_+ - \log r_- - \frac32\log(N-k) + \frac 32 \log k = 0\label{eq:relu_saddle_3}
\end{align}
These equations can be solved straightforwardly to give the required result. While the extremum of the saddle point approximation is qualitatively identical to the linear network we expect fluctuations in this case to be quite different. We can see that this will be the case first and foremost because we now have three fields instead of a single field. Fluctuations in these three directions will interact in interesting and measurable ways.
\end{proof}

Next we compute fluctuations about the saddle point solution. To do this, as before, we make the change of variables $k_l = k + \epsilon_k^l$, $r_+^l = r + \epsilon_+^l$, and $r_-^l = r+ \epsilon_-^l$. This leads us to our main result for rectified linear networks.

\begin{result} 
Small fluctuations of $r^l_{+/-}$ and $k^l$ about the saddle point solution are described by the energy in excess of the energy of the constant solution 
\begin{equation}
U = \mathcal L(\{r_+ + \epsilon^l_+\},\{r_- + \epsilon^l_-\},\{k + \epsilon^l_k\}) - \mathcal L(r_+,r_-,k),
\end{equation}
which may be expanded to quadratic order to give,
\begin{equation}
U =  -\frac12\sum_{l=0}^L\Bigg((1+\sigma_w^4/2)(\tilde\epsilon_+^l)^2 + (\tilde\epsilon_-^l)^2 + 3(\tilde\epsilon_k^l)^2  + \tilde\epsilon_k^l(\tilde\epsilon_+^l-\tilde\epsilon_-^l) - \sigma_w^2\tilde\epsilon_+^{l-1}(\tilde\epsilon_+^l + \tilde\epsilon_-^l)\Bigg)\label{eq:stochastic_network_relu_real_space}
\end{equation} 
where $\tilde\epsilon^l_{+/-} = \sqrt{2(1-\sigma_w^2/2)/\sigma_b^2}\epsilon^l_{+/-}$ and $\tilde\epsilon_k^l = \epsilon^l_k/\sqrt N$.
\end{result}
\begin{proof}
Before we begin we define $r = r_{+/-}$,
\begin{equation}
\sigma_w^2N^{-1}r^2 + \sigma_b^2 = \frac12\frac{\sigma_w^2\sigma_b^2 + 2(1-\sigma_w^2/2)\sigma_b^2}{1-\sigma_w^2/2} = \frac{\sigma_b^2}{1-\sigma_w^2/2} = \eta
\end{equation}
and $\alpha = \sigma_w^2N^{-1}$. Expanding the energy directly we find that,
\begin{align}
\mathcal L &= \frac12\sum_{l=0}^L\Bigg(\frac{2r^2 + 2r(\epsilon_+^l + \epsilon_-^l) + (\epsilon_+^l)^2 + (\epsilon_-^l)^2}{\eta +  \sigma_w^2N^{-1}\epsilon_+^{l-1}(2r+\epsilon_+^{l-1})} + N\log(\eta + \sigma_w^2N^{-1}\epsilon_+^l(2r+\epsilon_+^l))\nonumber\\
&\hspace{10pc}+ (N/2 - \epsilon_k^l)(3\log(N/2-\epsilon_k^l) - 2\log(r + \epsilon_+^l)) \nonumber\\
&\hspace{10pc}+ (N/2+\epsilon_k^l)(3\log(k+\epsilon_k^l) - 2\log(r + \epsilon_-^l) \Bigg)\Bigg]\\
&\approx \frac12\sum_{l=0}^L\Bigg(\frac N{r^2}(1+\alpha^2N^2/2)(\epsilon_+^l)^2 + \frac N{r^2}(\epsilon_-^l)^2 + \frac 6N(\epsilon_k^l)^2\nonumber\\
&\hspace{6pc} + \frac2r\epsilon_k^l(\epsilon_+^l - \epsilon_-^l) - \frac{\alpha N^2}{r^2}\epsilon_+^{l-1}(\epsilon_+^l + \epsilon_-^l)\Bigg)\Bigg].
\end{align}
Substituting back in for $\alpha$ and $r$ we arrive at the equation,
\begin{align}
&= \frac12\sum_{l=0}^L\Bigg(\frac{2(1-\sigma_w^2/2)(1+\sigma_w^4/2)}{\sigma_b^2}(\epsilon_+^l)^2 + \frac{2(1-\sigma_w^2/2)}{\sigma_b^2}(\epsilon_-^l)^2 + \frac 6N(\epsilon_k^l)^2  \nonumber\\
&\hspace{4pc}+ 2\sqrt{\frac{2(1-\sigma_w^2/2)}{N\sigma_b^2}}\epsilon_k^l(\epsilon_+^l-\epsilon_-^l) - \frac{2\sigma_w^2(1-\sigma_w^2/2)}{\sigma_b^2}\epsilon_+^{l-1}(\epsilon_+^l + \epsilon_-^l)\Bigg)\Bigg]. 
\end{align}
We will make the change of variables $\epsilon^l_k \to \epsilon^l_k/\sqrt{N}$, $\epsilon^l_{+/-} \to \sqrt{2(1-\sigma_w^2/2)/\sigma_b^2}\epsilon^l_{+/-}$ in which case we may rewrite the above equation in normal coordinates as,
\begin{align}
Q &= \int[d\epsilon_k^l][d\epsilon_+^l][d\epsilon_-^l]\exp\Bigg[-\frac12\sum_{l=0}^L\Bigg((1+\sigma_w^4/2)(\epsilon_+^l)^2 + (\epsilon_-^l)^2 + 3(\epsilon_k^l)^2 \nonumber\\
&\hspace{9pc} + \epsilon_k^l(\epsilon_+^l-\epsilon_-^l) - \sigma_w^2\epsilon_+^{l-1}(\epsilon_+^l + \epsilon_-^l)\Bigg)\Bigg]
\end{align}
which completes the proof.
\end{proof}

Next we change variables to Fourier basis and - as in the case of the linear network - we find that the Fourier modes are decoupled. Unlike in the case of the linear network here the different fields remain coupled yielding. Small fluctuations in the Fourier basis are therefore described by a Gaussian distribution with nontrivial covariance matrix. In the main text we checked this covariance matrix against measurements from numerical experiments.

\begin{result}
Changing variables to Fourier basis in eq.~\eqref{eq:stochastic_network_relu_real_space} yields the distribution over Fourier modes defined by the energy,
\begin{equation}
U = -\frac12\sum_q 
\begin{pmatrix}\epsilon_+^{-q} & \epsilon_-^{-q} & \epsilon_k^{-q}\end{pmatrix}\begin{pmatrix}
1+\sigma_w^4/2 -\sigma_w^2\cos q & -\frac12\sigma_w^2e^{-iq} & \frac12 \\
-\frac12\sigma_w^2e^{iq} & 1 & -\frac12 \\
\frac12 & -\frac12 & 3
\end{pmatrix}\begin{pmatrix}\epsilon_+^{q} \\ \epsilon_-^{q} \\ \epsilon_k^{q}\end{pmatrix}
\end{equation}
which is well defined as a Gaussian with inverse covariance matrix,
\begin{equation}
\Sigma^{-1}(q) = \begin{pmatrix}
1+\sigma_w^4/2 -\sigma_w^2\cos q & -\frac12\sigma_w^2e^{-iq} & \frac12 \\
-\frac12\sigma_w^2e^{iq} & 1 & -\frac12 \\
\frac12 & -\frac12 & 3
\end{pmatrix}.
\end{equation}
\end{result}
\begin{proof}
We now change variables to the Fourier basis and write 
\begin{equation}
\epsilon^l_+ = \sum_l e^{-iql}\epsilon^q_+ \hspace{3pc} \epsilon^l_- = \sum_l e^{-iql}\epsilon^q_- \hspace{3pc} \epsilon^l_k = \sum_l e^{-iql}\epsilon^q_k
\end{equation}
where each of the $q_x$ are summed between $0$ and $2\pi$ in steps of $2\pi/L$. With this change of variables eq.~\eqref{eq:stochastic_network_relu_real_space} can be rewritten as,
\begin{align}
Q &= \int[d\epsilon_k^q][d\epsilon_+^q][d\epsilon_-^q]\exp\Bigg[-\frac12\sum_{l=0}^L\sum_{q,q'}\Bigg((1+\sigma_w^4/2)\epsilon_+^q\epsilon_+^{q'} + \epsilon_-^q\epsilon_-^{q'} + 3\epsilon_k^q\epsilon_k^{q'}  \nonumber\\
&\hspace{6pc} + \epsilon_k^q(\epsilon_+^{q'}-\epsilon_-^{q'}) - \sigma_w^2\epsilon_+^q(\epsilon_+^{q'} + \epsilon_-^{q'})e^{-iq}\Bigg)e^{i(q+q')l}\Bigg] \\
&= \int[d\epsilon_k^q][d\epsilon_+^q][d\epsilon_-^q]\exp\Bigg[-\frac12\sum_q\Bigg((1+\sigma_w^4/2 - \sigma_w^2\cos q)|\epsilon_+^q|^2 + |\epsilon_-^{q}|^2 + 3|\epsilon_k^q|^2  \nonumber\\
&\hspace{6pc} + \frac12\left(\epsilon_k^q(\epsilon_+^{q})^*-\epsilon_k^q(\epsilon_-^{q})^* - \sigma_w^2\epsilon_+^q(\epsilon_-^q)^*e^{-iq} + \text{h.c.}\right)\Bigg)\Bigg]
\end{align}
where $\text{h.c.}$ refers to the Hermitian conjugate. This may be rewritten in matrix form as,
\begin{equation}
Q = \int[d\epsilon_k^q][d\epsilon_+^q][d\epsilon_-^q]\exp\left[-\frac12\sum_q 
\begin{pmatrix}\epsilon_+^{-q} & \epsilon_-^{-q} & \epsilon_k^{-q}\end{pmatrix}\Sigma^{-1}(q)\begin{pmatrix}\epsilon_+^{q} \\ \epsilon_-^{q} \\ \epsilon_k^{q}\end{pmatrix}\right]
\end{equation}
which is well defined as a Gaussian with inverse covariance matrix,
\begin{equation}
\Sigma^{-1}(q) = \begin{pmatrix}
1+\sigma_w^4/2 -\sigma_w^2\cos q & -\frac12\sigma_w^2e^{-iq} & \frac12 \\
-\frac12\sigma_w^2e^{iq} & 1 & -\frac12 \\
\frac12 & -\frac12 & 3
\end{pmatrix}.
\end{equation}
As before this allows us to test the predictions of the theory.
\end{proof}

Finally, we construct the effective field theory under the assumption that all of the fields are slowly varying with respect to a single layer.
\begin{result}
The effective field theory for long wavelength fluctuations of the rectified linear network will be given by,
\begin{align}
U = \frac12\int dx\Bigg[&(1 - \sigma_w^2 +\sigma_w^4/2)(\epsilon_+(x))^2 + (\epsilon_-(x))^2 + 3(\epsilon_k(x))^2 + \epsilon_k(x)(\epsilon_+(x) - \epsilon_-(x)) \nonumber\\
& + \sigma_w^2\epsilon_+(x)\epsilon_-(x) + \sigma_w^2\left(\frac{\partial\epsilon_+(x)}{\partial x}\right)^2 + \sigma_w^2\frac{\partial \epsilon_+(x)}{\partial x}\epsilon_-(x)  \Bigg].
\end{align}
\end{result}
\begin{proof}
We begin by noting that eq.~\eqref{eq:stochastic_network_relu_real_space} can be rewritten as,
\begin{align}
U &=  -\frac12\sum_{l=0}^L\Bigg((1+\sigma_w^4/2)(\tilde\epsilon_+^l)^2 + (\tilde\epsilon_-^l)^2 + 3(\tilde\epsilon_k^l)^2  + \tilde\epsilon_k^l(\tilde\epsilon_+^l-\tilde\epsilon_-^l) - \sigma_w^2\tilde\epsilon_+^{l-1}(\tilde\epsilon_+^l + \tilde\epsilon_-^l)\Bigg)\\
&= -\frac12\sum_{l=0}^L\Bigg((1 - \sigma_w^2 +\sigma_w^4/2)(\tilde\epsilon_+^l)^2 + (\tilde\epsilon_-^l)^2 + 3(\tilde\epsilon_k^l)^2 + \sigma_w^2\tilde \epsilon_+^l\tilde \epsilon_-^l  + \tilde\epsilon_k^l(\tilde\epsilon_+^l-\tilde\epsilon_-^l) \nonumber\\&\hspace{13.5pc} + \frac12\sigma_w^2(\tilde\epsilon_+^l - \tilde\epsilon_+^{l-1})^2 + \sigma_w^2(\tilde\epsilon_+^l - \tilde\epsilon_+^{l-1})\tilde\epsilon_-^l\Bigg).
\end{align}
As before we when fluctuations are slowly varying on the order of a single layer of the network we can interpret $\tilde\epsilon^l_+ - \tilde\epsilon^{l-1}_+ \approx \partial \epsilon_+/\partial x$ and we can approximate the sum by an integral. Together these approximations give,
 \begin{align}
U = \frac12\int dx\Bigg[&(1 - \sigma_w^2 +\sigma_w^4/2)(\epsilon_+(x))^2 + (\epsilon_-(x))^2 + 3(\epsilon_k(x))^2 + \epsilon_k(x)(\epsilon_+(x) - \epsilon_-(x)) \nonumber\\
& + \sigma_w^2\epsilon_+(x)\epsilon_-(x) + \sigma_w^2\left(\frac{\partial\epsilon_+(x)}{\partial x}\right)^2 + \sigma_w^2\frac{\partial \epsilon_+(x)}{\partial x}\epsilon_-(x)  \Bigg].
\end{align}
as expected.
\end{proof}

\bibliography{example_paper}
\bibliographystyle{plainnat}

\end{document}